\documentclass[11pt]{article}
\usepackage[utf8]{inputenc}
\usepackage[T1]{fontenc}
\usepackage[letterpaper,margin=1in]{geometry}
\usepackage[hyphens]{url}
\usepackage{graphicx}
\usepackage[round,authoryear]{natbib}
\usepackage{caption}
\usepackage{booktabs}
\usepackage{amsmath, amsfonts, amssymb, amsthm}
\usepackage{mathtools}
\usepackage{xcolor}
\usepackage{multirow}
\usepackage{microtype}
\usepackage[hidelinks]{hyperref}
\usepackage{cleveref}
\newtheorem{proposition}{Proposition}
\newtheorem{definition}{Definition}
\newtheorem{theorem}{Theorem}
\newtheorem*{restated}{Proposition}
\newtheorem*{restatedtheorem}{Theorem}

\title{Cross-View Correspondence Is a Measurement Intervention:\\
Two-Sided Validation for Agent Evaluation and Credit Assignment}
\author{
  Zhen Zhang \quad Ahmad Hafez \quad Amr Alanwar\\[0.5em]
  {\small School of Computation, Information and Technology}\\
  {\small Technical University of Munich, Germany}\\
  {\footnotesize\texttt{\{zhenzhang.zhang, a.hafez, alanwar\}@tum.de}}
}
\date{}
\begin{document}
\maketitle

\begin{abstract}
Agent evaluations and trace-based learning often compare outputs across transformed views through a
post-response correspondence treated as neutral preprocessing. We show that this correspondence is a
measurement intervention: omitting it can manufacture sensitivity, an over-aggressive map can
manufacture invariance, and multiple optimal correspondences can leave mechanism labels and signed
learning credit unidentified. We develop a validity theory and audit with three components: two-sided
validation of nuisance removal and response preservation, all-optima identification of downstream
conclusions, and uncertainty propagation after validity is established. We characterize the linear
feasibility boundary for response-preserving nuisance removal, compute sharp ranges over exact-optimum
correspondence sets, and give a distribution-free certificate that retains a credit coordinate only
when all exact optima agree on its nonzero sign. Across public code and SQL pipelines, two
deterministic optimal tracebacks disagree on temporal localization for $55.9\%$ of $1{,}586$ nonzero
trajectory pairs; two frozen $800$-rollout tool-use audits, including a task-and-seed-disjoint
replication, expose exact-optimum reversals of intended turn-level credit, although a clean public
quick-start subset shows none. A pre-registered transport gate failed on natural responses; frozen
corrected and held-out controls then show that a map calibrated only on benign examples erases every
retained harmful response, while two-sided validation selects response-preserving alternatives.
Cross-view correspondence must therefore
be declared, validated, and propagated into uncertainty before agent evaluation or credit assignment
supports a point conclusion.
\end{abstract}

\section{Introduction}
\label{sec:intro}
An optimal matching can support opposite scientific conclusions. Suppose one reference tool call
ties with two predicted calls. Either match preserves total score, yet credits a different turn and
can induce opposite local credit after reward compilation. The objective is optimal but does not
identify the downstream conclusion.

This problem arises whenever a pipeline transforms a prompt, tool schema, interface, repository, or
observation before comparing agent behavior. The evaluator must then decide which elements correspond
across views. We use \emph{correspondence} for the full interface: cross-view transport, within-output
matching, and completion of exact ties. Our central claim is that this interface is a
\emph{measurement intervention}, not neutral preprocessing. Unless a specific convention is part of
the estimand, the scientific object is the set of conclusions supported by all legal
correspondences, not the single conclusion returned by one solver run.

The intervention can fail in three directions (\Cref{fig:overview}). Omitting an appropriate
transport leaves representational nuisance behind and manufactures sensitivity. A map calibrated
only on benign examples can erase response-bearing differences and manufacture invariance. Finally,
several exact-optimal matchings can preserve the same scalar score while disagreeing on a mechanism
label or signed learning credit. A cross-view point claim is therefore solver-independent only when
the correspondence rule is declared, shown to remove nuisance without suppressing response, and
supported by every legal exact optimum.

\begin{figure*}[t]
\centering
\includegraphics[width=\textwidth]{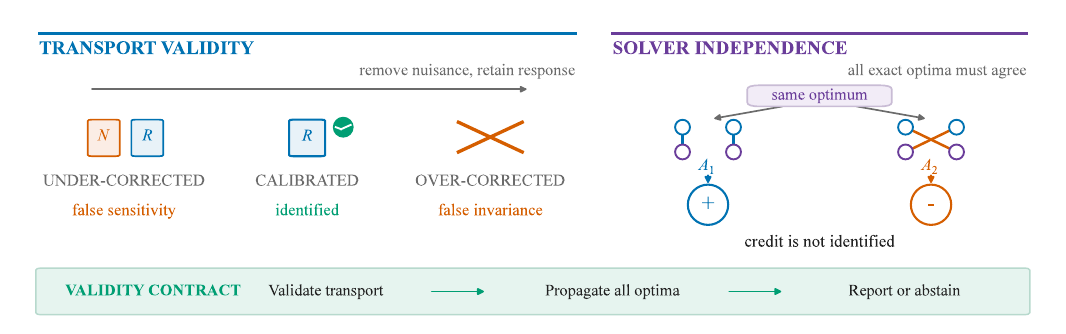}
\caption{Correspondence validity has two parts. Transport must remove nuisance without erasing
response, and all exact-optimal correspondences must support the same downstream conclusion. Our
contract validates both requirements before returning a point, a common direction, or abstention.}
\label{fig:overview}
\end{figure*}

Prior work separately studies declared transformations
\citep{formatspread2024,rotbench2024,guirobustbench2026,repomirage}, unknown maps and acceptable
matches \citep{gulrajani2022identifiability,lember2014optimal,morucci2022robust}, and common-direction
or structured inference \citep{sener2018multiobjective,jaggi2013frankwolfe,koller2009pgm}. What is
missing is one contract that validates the map, propagates every exact optimum, and certifies the
resulting learning decision.

Our \emph{correspondence contract} closes this gap; its contributions are as follows:
\begin{itemize}
    \item A two-sided validity contract combines transport validation, all-optima identification,
    and subsequent uncertainty propagation.
    \item Legal-update geometry and a compiler-dependent frontier provide safe credit, common
    progress, compact certification, and a source-realizable hardness boundary.
    \item Natural audits expose hidden diagnosis and credit policies, repair a public strict metric,
    and preserve failed, corrected, and held-out falsification records.
\end{itemize}
The scope is not that every pipeline fails, but that a point conclusion requires a declared and
validated correspondence contract.

\section{Correspondence Contracts and Two-Sided Validity}
\label{sec:validity}
\label{sec:model}
Two levels of correspondence appear in the pipeline. We call the cross-view map $\Phi_T$ a
\emph{transport}, and an assignment $A$ between elements of two outputs a \emph{matching}.
The transport must be validated first; any remaining exact matching choices must then be
propagated to the downstream conclusion. The two-way tie of \Cref{sec:intro} runs throughout.

\paragraph{The cross-view target.}
An audit fixes a task $g$ and a base object $R_0$ (prompt, tool schema, screen, repository), forms
$R_0'=T(R_0)$ with a perturbation $T$ intended to be semantics-preserving, and runs the same random
agent $A_\omega$ on both. Let $U=A_\omega(R_0)\in\mathcal Z_0$ and
$V=A_\omega(T(R_0))\in\mathcal Z_T$ be paired outputs. A transport
$\Phi_T:\mathcal Z_T\to\mathcal Z_0$ aligns the perturbed output with the original view. For a
declared loss or distance $d$ on $\mathcal Z_0$, the response of interest is
\begin{equation}
J_T(\Phi_T)=\mathbb E\!\left[d\!\left(\Phi_T(V),U\right)\right].
\label{eq:transportestimand}
\end{equation}
Writing the raw gap $\mathbb E[d(V,U)]$ already assumes that the identity inclusion is a justified
transport. For ASTs, renamed tools, GUIs, and trajectories, it often is not. For observed law $P$ and
declared class $\mathcal P_T$, the sharp population response set is
\begin{equation}
\mathcal I_T(P,\mathcal P_T)=
\{\mathbb E_P[d(\Phi(V),U)]:\Phi\in\mathcal P_T\}.
\label{eq:transportset}
\end{equation}
This set is sharp because $P$ does not select $\Phi$: every declared $\Phi$ defines a compatible
structural world and every admitted world uses one such map. A point claim therefore requires a
declared map or agreement over the set; more samples cannot resolve an undeclared convention.

\paragraph{Consuming exact ties.}
Even after a transport has been declared, a lower-level matching objective can admit several exact
assignments. The next definition records both the objective and what the pipeline does with that set.

\begin{definition}[Correspondence contract]
For input $x$, let $Q_x(A)$ be a correspondence objective with exact optimizer set
$\mathcal A_Q^\star(x)$, and let $h(A)$ be the downstream readout. A correspondence contract
$\mathfrak C=(Q,\rho,h)$ declares both the objective and how its optimizer set is consumed. Policy
$\rho$ may be a deterministic selector $s$, a stochastic law $\pi$, or the requirement that the
scientific conclusion be invariant over all admissible exact optima.
\end{definition}
A deterministic tie-break defines a reproducible selector-relative target, but not a property
independent of solver convention. A stochastic law defines another target and must state whether
matching is sampled before or after downstream normalization. Soft matching changes $Q$ and hence
the contract; it does not identify the original exact-optimum claim.

\paragraph{Why calibration must be two-sided.}
Let $\mathcal C_0$ contain \emph{null controls} $(u,v_0)$ known to differ only by the inserted view,
and define null leakage
$e_0(\Phi)=\sup_{(u,v_0)\in\mathcal C_0}d(\Phi(v_0),u)$. Small $e_0$ prevents false
sensitivity. It does not prevent false invariance. For a declared real-valued behavioral witness
$q_T:\mathcal Z_T\to\mathbb R$ and its original-view counterpart $q_0$, call $\Phi$
\emph{$q$-faithful} on $\mathcal V$ when $q_0(\Phi(v))=q_T(v)$ for every $v\in\mathcal V$.
When response magnitude is measured by a metric $d_T$, define the restricted response gain
\begin{equation}
\kappa_{\mathcal G}(\Phi)=
\inf_{(v_0,v_1)\in\mathcal G}
\frac{d(\Phi(v_1),\Phi(v_0))}{d_T(v_1,v_0)},
\label{eq:responsegain}
\end{equation}
over the positive-control response-pair set $\mathcal G$ with nonzero denominator (distinct from the
response subspace $\mathcal R$ of part (c) and the transport class $\mathcal P_T$).

\begin{definition}[Two-sided alignment certificate]
For declared null controls $\mathcal C_0$, response-bearing controls $\mathcal G$, and thresholds
$\delta_0\ge0$ and $\kappa_0>0$, a transport is \emph{certified} when
$e_0(\Phi)\le\delta_0$, $\kappa_{\mathcal G}(\Phi)\ge\kappa_0$, and every required behavioral
witness is preserved. The certificate is relative to these controls and witnesses; it is not a claim
of global semantic equivalence.
\end{definition}

The following result explains why good reconstruction on benign examples is insufficient and gives a
sharp linear boundary for simultaneously removing nuisance and retaining response.

\begin{theorem}[Two-sided validity and its linear boundary]
\label{thm:alignmenttrap}
\emph{(a) One-sided insufficiency.} Over an unrestricted transport class, any values of $e_0$
on $\mathcal C_0$ are compatible with $\kappa_{\mathcal G}=0$ on an untested response pair and with
failure of $q$-faithfulness. Thus benign-only calibration cannot certify an invariance conclusion.
\emph{(b) Witness certificate.} If $\Phi$ is $q$-faithful and $q_0$ is $L$-Lipschitz under $d$, then
$|q_T(v_1)-q_T(v_0)|\le Ld(\Phi(v_1),\Phi(v_0))$; transported invariance at radius $\tau$ therefore
implies witness change at most $L\tau$.
\emph{(c) Linear feasibility.} In a finite-dimensional Hilbert space, let $\mathcal N$ and
$\mathcal R$ be nuisance and response subspaces. A linear transport that kills $\mathcal N$ and has
positive restricted gain on $\mathcal R$ exists iff $\mathcal N\cap\mathcal R=\{0\}$. For orthogonal
projection onto $\mathcal N^\perp$, the gain is
$\sin\theta_{\min}(\mathcal R,\mathcal N)$, the sine of the smallest principal angle
\citep{bjorck1973angles}.
\end{theorem}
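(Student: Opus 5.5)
The three parts are independent, and I will prove each with the most direct tool available. Part (a) is a construction, (b) is a one-line chain of inequalities, and (c) is finite-dimensional linear algebra plus the variational characterization of principal angles.

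\emph{Part (a).} I will fix any target leakage profile and exhibit a transport that realizes it on $\mathcal C_0$ yet collapses an untested response pair. Let $(v_0,v_1)$ be a response-bearing pair with $d_T(v_1,v_0)>0$, chosen disjoint from the perturbed views appearing in $\mathcal C_0$. The class is unrestricted, so $\Phi$ can be defined pointwise. On each null view $v_0'$ with $(u,v_0')\in\mathcal C_0$, set $\Phi(v_0')$ to any point at the prescribed distance from $u$; choosing $\Phi(v_0')=u$ gives $e_0=0$. On the untested pair, set $\Phi(v_1)=\Phi(v_0)$. Including this pair in $\mathcal G$ then forces $\kappa_{\mathcal G}(\Phi)=0$. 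For the witness claim, choose $q_T$ with $q_T(v_1)\neq q_T(v_0)$. Then $q_0(\Phi(v_1))=q_0(\Phi(v_0))$ cannot equal both values, so $q$-faithfulness fails on $\{v_0,v_1\}$. The conclusion follows because the null-control data are identical across this $\Phi$ and a faithful alternative (for instance, one that sends $v_1$ to a distinct point). Hence no function of $e_0$ alone can certify invariance.

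\emph{Part (b).} Faithfulness gives $q_T(v_i)=q_0(\Phi(v_i))$ for $i=0,1$. Applying the Lipschitz bound to $q_0$ yields
\[
|q_T(v_1)-q_T(v_0)|=|q_0(\Phi(v_1))-q_0(\Phi(v_0))|\le L\,d(\Phi(v_1),\Phi(v_0)).
\]
If $d(\Phi(v_1),\Phi(v_0))\le\tau$, the witness change is therefore at most $L\tau$.

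\emph{Part (c).} Here I take $d$ and $d_T$ to be the Hilbert norm, and I read ``positive restricted gain on $\mathcal R$'' as $\inf_{r\in\mathcal R,\|r\|=1}\|\Phi r\|>0$. The key step is that, in finite dimensions, the unit sphere of $\mathcal R$ is compact. Consequently, a positive infimum is equivalent to $\ker\Phi\cap\mathcal R=\{0\}$.

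For necessity, suppose $0\neq x\in\mathcal N\cap\mathcal R$. Then $\Phi x=0$, so the gain is $0$.

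For sufficiency, suppose $\mathcal N\cap\mathcal R=\{0\}$ and take $P$ to be the orthogonal projection onto $\mathcal N^\perp$. Then $\ker P=\mathcal N$ and $P$ kills $\mathcal N$. Since $\ker P\cap\mathcal R=\{0\}$, compactness makes the gain positive.

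For the gain formula, write $r=n+m$ with $n=P_{\mathcal N}r$, so that $\|Pr\|^2=\|r\|^2-\|P_{\mathcal N}r\|^2$. By the Björck--Golub characterization, $\cos\theta_{\min}=\max_{r\in\mathcal R,n\in\mathcal N,\ \text{unit}}\langle r,n\rangle=\max_{\|r\|=1}\|P_{\mathcal N}r\|$. Therefore
\[
\min_{\|r\|=1}\|Pr\|^2=1-\cos^2\theta_{\min}=\sin^2\theta_{\min},
\]
and the gain equals $\sin\theta_{\min}$.

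The main obstacle is making (a) honest. The ``untested'' pair must be genuinely outside $\mathcal C_0$, and I must be explicit about whether it counts as an element of $\mathcal G$. The cleanest route is to state the result as non-identifiability: two transports that agree on $\mathcal C_0$ but differ on the untested pair produce the same $e_0$ with different gain and faithfulness.
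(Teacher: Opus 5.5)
Your proof is correct and follows the paper's argument essentially step for step: the same pointwise extension $\Phi(v_1)=\Phi(v_0)$ with a witness separating $v_0$ and $v_1$ for (a), the same faithfulness-plus-Lipschitz chain for (b), and the same kernel/compactness argument with orthogonal projection onto $\mathcal N^\perp$ for (c). Two small differences: your identity $\|P_{\mathcal N^\perp}r\|^2=1-\|P_{\mathcal N}r\|^2$ for unit $r\in\mathcal R$ spells out a step the paper leaves to the definition of principal angles, and the paper lets $v_0$ be a calibrated null point and needs only $v_1$ untested, which is slightly more general than your requirement that both points avoid the views in $\mathcal C_0$.
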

The proof mechanism is short. For (a), leave every benign control fixed but map one untested response
to its baseline; benign error is unchanged while response gain becomes zero. Part (b) follows directly
from faithfulness and Lipschitzness. For (c), a shared nonzero response--nuisance direction must be
annihilated by every nuisance-killing map; conversely, projection onto $\mathcal N^\perp$ is injective
on $\mathcal R$, and compactness of its unit sphere gives minimum gain
$\sin\theta_{\min}(\mathcal R,\mathcal N)$. Thus the theorem combines elementary validity
implications and standard principal-angle geometry into a two-sided gate at the correspondence
interface; App.~A.1--A.2 give the identification and validity proofs in full. These arguments prove all three parts. Part (c) bounds linear transports only: the nonlinear source
transformations audited in \Cref{sec:reanalysis} are admitted by frozen controls, not by (c).
The resulting refusal is explicit: invalid transport, correspondence-dependent conclusion, or
statistical inconclusiveness.

\section{Solver-Independent Credit: Geometry and Complexity}
\label{sec:updatebody}
\subsection{Legal Update Bodies and Safe Decisions}
Validating the cross-view transport does not choose among tied output correspondences. We therefore
carry every exact optimum through the same reward compiler and policy-score map, rather than treating
one solver completion as the target.

Let $\mathcal F=\mathcal A_Q^\star(x)$ be the legal exact-correspondence set for a training group
(the exact-optimum fiber), finite in every audited pipeline, so every body below is a compact
polytope. Let $r(A)$ be its fine-grained reward vector, $N$ the declared reward compiler, and
$S_\theta$ the policy-score linear map at parameters $\theta$. Each optimum produces
$g_\theta(A)=S_\theta N(r(A))$. Their convex hull is the \emph{legal update body}:
\begin{equation}
\mathcal K_\theta=\operatorname{conv}\{g_\theta(A):A\in\mathcal F\}.
\label{eq:updatebody}
\end{equation}
A set and its convex hull share support functions, so convexification changes neither linear support
nor the existence of a direction with positive inner
product against every legal update. In the running two-way tie, $\mathcal F$ has two elements and
$\mathcal K_\theta$ is the line segment joining their compiled updates. The decision is whether that
segment collapses to one point, stays strictly on one side of the origin, or reaches the origin.

\begin{proposition}[Solver-independent update trichotomy]
\label{prop:updatetrichotomy}
The update is solver-identified iff $\mathcal K_\theta$ is a singleton. A unit direction $u$ with
$\inf_{g\in\mathcal K_\theta}\langle u,g\rangle>0$ exists iff
$0\notin\mathcal K_\theta$. If $x^\star$ is the closest point in
$\mathcal K_\theta$ to the origin, then
$u^\star=x^\star/\|x^\star\|$ certifies margin at least $\|x^\star\|$.
If $0\in\mathcal K_\theta$, no strict common-progress direction exists.
\end{proposition}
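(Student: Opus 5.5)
The plan is to treat $\mathcal K_\theta$ as a nonempty compact convex polytope in a finite-dimensional inner-product space. It is the convex hull of the finitely many points $g_\theta(A)$, $A\in\mathcal F$, and $\mathcal F$ is nonempty because an exact optimizer exists. I will prove the four claims in the order stated. Only the separation claims need real work; everything reduces to the projection of the origin onto a closed convex set.

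\emph{Singleton claim.} I read ``solver-identified'' as meaning that every legal exact optimum yields the same compiled update, i.e.\ $g_\theta$ is constant on $\mathcal F$. If all $g_\theta(A)$ coincide, their convex hull is that single point. Conversely, if $\mathcal K_\theta$ is a singleton, every generator $g_\theta(A)\in\mathcal K_\theta$ equals that point. So this step is definitional.

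\emph{Strict direction iff $0\notin\mathcal K_\theta$.}
\begin{itemize}
\item[($\Rightarrow$)] If $\inf_{g\in\mathcal K_\theta}\langle u,g\rangle>0$, then $g=0$ would give $0>0$, which is impossible. Hence $0\notin\mathcal K_\theta$.
\item[($\Leftarrow$)] I will use the closest-point construction directly, which proves existence and the margin bound together. Since $\mathcal K_\theta$ is compact and convex and $\|\cdot\|^2$ is strictly convex, there is a unique minimizer $x^\star$ of $\|x\|$ over $\mathcal K_\theta$. It is nonzero because $0\notin\mathcal K_\theta$.
\end{itemize}

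\emph{Margin bound.} The main step is the variational inequality $\langle x^\star,g-x^\star\rangle\ge0$ for all $g\in\mathcal K_\theta$. I obtain it by differentiating $t\mapsto\|x^\star+t(g-x^\star)\|^2$ at $t=0^+$, which is legitimate because convexity keeps the segment inside $\mathcal K_\theta$ for $t\in[0,1]$. The inequality rearranges to $\langle x^\star,g\rangle\ge\|x^\star\|^2$, so with $u^\star=x^\star/\|x^\star\|$ we get $\langle u^\star,g\rangle\ge\|x^\star\|>0$ uniformly in $g$. This gives the margin certificate and completes ($\Leftarrow$). By the support-function remark preceding the proposition, the same direction works against every generator $g_\theta(A)$ itself, not only against their hull.

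\emph{Case $0\in\mathcal K_\theta$.} The final sentence is the contrapositive of ($\Rightarrow$). If some $u$ satisfied $\langle u,g\rangle>0$ for all generators, then by linearity it would satisfy $\langle u,\cdot\rangle>0$ on all of $\mathcal K_\theta$, including at $0$, a contradiction. In terms of generators, write $0=\sum_A\lambda_A g_\theta(A)$ with $\lambda_A\ge0$ and $\sum_A\lambda_A=1$. Then $\sum_A\lambda_A\langle u,g_\theta(A)\rangle=0$, so some generator has $\langle u,g_\theta(A)\rangle\le0$.

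I expect no genuine obstacle. The only points needing care are the interpretation of ``solver-identified'' and justifying the variational inequality, which is the standard projection theorem onto a closed convex set in a Hilbert space.
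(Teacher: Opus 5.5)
Your proposal is correct and follows the paper's proof essentially step for step: the singleton clause is definitional, the strict direction and margin come from the projection optimality condition $\langle g-x^\star,x^\star\rangle\ge0$ divided by $\|x^\star\|$, and the case $0\in\mathcal K_\theta$ uses the same convex-combination contradiction. Your one-sided derivative of $t\mapsto\|x^\star+t(g-x^\star)\|^2$ simply derives the projection inequality that the paper cites.
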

This is the classical minimum-norm separation result used in multi-objective optimization
\citep{fliege2000steepest,desideri2012mgda,sener2018multiobjective}; the new object is the body induced
by compiling every legal correspondence. If each $F_A$ is $L$-smooth with certified margin $m$, then
$F_A(\theta+\eta u)\ge F_A(\theta)+\eta m-L\eta^2/2$; hence every legal objective improves locally
for $0<\eta<2m/L$ (App.~A.6).

The middle case can be certified without enumerating the entire body. A linear-support oracle for
$\mathcal K_\theta$ is enough to construct $x^\star$ with the classical fully corrective
Frank--Wolfe/Gilbert procedure \citep{gilbert1966minnorm,jaggi2013frankwolfe}. At iterate $x$, one
support query gives dual gap $\zeta$ and the explicit certificate
\[
\min_{g\in\mathcal K_\theta}\left\langle x/\|x\|,g\right\rangle
\ge \|x\|-\zeta/\|x\|.
\]
After median-norm rescaling, all $15$ natural bodies reached $\zeta\le10^{-8}$ in at most nine iterations (cap $1{,}000$); direct full-hull QPs agreed within $8.33\times10^{-12}$.
Coordinatewise diagnosis asks a different question. It keeps coordinate $j$ only when every
$A\in\mathcal F$ has the same nonzero sign. This rule is pointwise maximal because any additional
coordinate is zero or has the opposite sign under a legal optimum.
Its vector need not lie in $\mathcal K_\theta$ and is not automatically a legal full-gradient update.
Neither procedure invents a probability law over ties.

The compiler must be inside the body definition.
\begin{proposition}[Relative-credit normalization trilemma]
\label{prop:normalizationtrilemma}
Let $N$ be $C^1$ on a connected translation-stable domain of nonconstant $n$-sample reward groups,
$n\ge3$, and let $\mathbf1$ be the all-ones vector. Translation neutrality
$N(x+c\mathbf1)=N(x)$ together with current-peer isolation
$\partial N_i/\partial x_j=0$ for $i\ne j$ forces $N$ to be constant. On any ray-stable centered
subdomain, exact scale neutrality $N(ay)=N(y)$ implies
$DN(ay)=a^{-1}DN(y)$, so a nonconstant $N$ has no uniformly Lipschitz extension to zero dispersion.
Finally, sampling an exact optimum before normalization and normalizing its fiber mean agree for every
finitely supported law exactly when $N$ is affine on the fiber hull.
\end{proposition}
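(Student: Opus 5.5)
The proposition has three independent parts, and I would prove each with a separate elementary calculus or convexity argument.

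For the first part, write $J=DN$. Translation neutrality gives $N(x+c\mathbf1)=N(x)$ for all small $c$. Differentiating in $c$ at $c=0$ yields $J(x)\mathbf1=0$, that is, $\sum_j \partial N_i/\partial x_j=0$ for each $i$. Current-peer isolation makes $J$ diagonal, so this row sum collapses to $\partial N_i/\partial x_i=0$. Hence $J\equiv0$ on the domain, and connectedness together with $C^1$ gives $N$ constant. Translation stability is needed only so that the derivative in $c$ is taken inside the domain, and $n\ge3$ is not used in this step.

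For the scale part, differentiate the identity $N(ay)=N(y)$ with respect to $y$ on a ray-stable set. The chain rule gives $a\,DN(ay)=DN(y)$, which is the stated relation $DN(ay)=a^{-1}DN(y)$. Now suppose $N$ is nonconstant on a centered subdomain. Then some $y$ has $DN(y)\ne0$, since otherwise $N$ would be constant along segments; I would choose $y$ in a connected piece, or argue pathwise. Along the ray $ay$ with $a\to0^+$, the point $ay$ tends to the zero-dispersion point $0$ and $\|DN(ay)\|=\|DN(y)\|/a\to\infty$. For a $C^1$ function, a Lipschitz constant on a convex neighborhood bounds the derivative norm, so no uniformly Lipschitz extension near zero dispersion exists. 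The delicate point is that ``Lipschitz'' must be tested on open pieces where the derivative bound applies. I would state it as $\sup\|DN\|=\infty$ near $0$ and note that local Lipschitzness on a neighborhood forces bounded derivative.

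For the last part, compare the two policies: ``sample $A\sim\pi$, then normalize'' gives $\mathbb E_\pi[N(r(A))]$, while ``normalize the fiber mean'' gives $N(\mathbb E_\pi r(A))$. Their equality for every finitely supported law on points of the fiber hull is exactly Jensen equality for all mixtures. If $N$ is affine on the hull, equality follows by linearity of expectation. Conversely, applying the equality to two-point laws gives $N(tx+(1-t)y)=tN(x)+(1-t)N(y)$ for all $x,y$ in the hull and $t\in[0,1]$. A function satisfying this on a convex set is affine on it, by the standard extension to the affine span.

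I expect the main obstacle to be the second part: making precise what ``no uniformly Lipschitz extension to zero dispersion'' means when $0$ may lie on the boundary of the domain, and ensuring the nonzero derivative propagates along the ray inside the domain. I would handle this by fixing $y$ with $DN(y)\ne0$ and using ray-stability to keep the whole ray in the domain.
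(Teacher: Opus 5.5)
Your argument for translation neutrality and the chain-rule identity $a\,DN(ay)=DN(y)$ follow the paper, but two steps fail as written. In the scale part, the claim that a nonconstant $N$ has some $y$ with $DN(y)\neq0$ needs the centered domain to be connected. Saying you would ``choose $y$ in a connected piece'' does not provide this, since a function that is constant on each component has $DN\equiv0$. Connectedness is exactly where $n\ge3$ enters, and you never use that hypothesis. The paper uses that $V\setminus\{0\}$, with $V=\mathbf1^\perp$, is connected because $\dim V=n-1\ge2$. For $n=2$ your step is false: a scale-neutral $N$ taking different constants on the two half-lines of $V\setminus\{0\}$ is nonconstant but has vanishing derivative. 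Alternatively, you can drop derivatives and connectedness altogether. If $N(y)\neq N(y')$, then $|N(ay)-N(ay')|=|N(y)-N(y')|$ stays fixed while $\|ay-ay'\|=a\|y-y'\|\to0$. This rules out any uniformly Lipschitz extension to zero dispersion on an arbitrary ray-stable domain, and indeed any continuous one.

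In the last part you quantify over finitely supported laws on points of the fiber hull. The laws in the statement are laws over exact optima, so they are supported on the fiber $\{r(A)\}$ itself. Two-point laws on fiber points give only affinity along segments joining fiber points. That is strictly weaker than affinity on the hull once the fiber has three affinely independent points. For a counterexample, add to the affine interpolant a smooth bump supported in a small ball around the centroid of the triangle that does not meet its edges. The missing step is the paper's. Every hull point can be written $z=\sum_j\lambda_jr_j$ with $r_j$ in the fiber, so equality for the law $\lambda$ gives $N(z)=\sum_j\lambda_jN(r_j)$. Apply this to $x=\sum_j\lambda_jr_j$, to $y=\sum_j\mu_jr_j$, and to $tx+(1-t)y$ with weights $t\lambda_j+(1-t)\mu_j$. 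Together these give $N(tx+(1-t)y)=tN(x)+(1-t)N(y)$ on the whole hull.
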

Differentiating translation neutrality yields $DN(x)\mathbf1=0$. Isolation makes the Jacobian
diagonal, so it must vanish. Scale neutrality similarly yields $aDN(ay)=DN(y)$; equality for every
two-point law is segment preservation. The requirements therefore trade off peer dependence, stability
near ties, and the order in which correspondence uncertainty is consumed (App.~A.3--A.5).

\subsection{A Compiler-Dependent Tractability Frontier}
\label{sec:frontier}
An update body may contain exponentially many completions without being hard to audit. The decisive
object is the compiler: it determines whether local tie choices remain separate or become globally
coupled.

\paragraph{Affine compilers.}
Suppose fiber $i$ has at most $q$ exact states. If $r$ and $N$ are affine on the product of exact
assignment faces, \Cref{eq:updatebody} has a polynomial-size extended formulation: support is a
linear program and the closest point is a convex QP. Assignment polytopes are integral
\citep{birkhoff1946tres}, so each face has exactly the exact matchings as vertices and the linear
program is exact rather than a relaxation. Independent exact fibers give a Minkowski sum
of local polytopes; constrained-zonotope representations support exact Minkowski sums through simple
identities \citep{scott2016cz}. Thus exponentially many completions need not imply
exponential auditing.

\paragraph{Local nonlinear compilers.}
For a turn-local nonlinear compiler, a projected query factors as
$c+\sum_t\psi_t(A_{S_t})$, where $S_t$ contains the fibers whose choices change turn $t$.
The \emph{compiler interaction graph} joins fibers that co-occur in a factor. Standard min-sum
elimination computes exact lower and upper support in
$\operatorname{poly}(n)q^{w+1}$ time when this graph has treewidth $w$
\citep{koller2009pgm}. The graph is a property of the composed reward pipeline, not of the matching problems viewed
separately. In all eight eligible natural groups its exact treewidth is at most three; min-sum
elimination matches exhaustive lower support in all $45$ frozen projected queries to
$1.19\times10^{-19}$.

\paragraph{Shared normalization.}
Operationally, flag a compiler for joint all-optima auditing when an exact fiber is non-singleton,
a group statistic is recomputed after tie completion, and normalization is non-affine on the fiber
hull. The flag does not itself imply hardness. For $z\in\mathbb R^n$, define
$\operatorname{Std}_\epsilon(z)=(z-\bar z\mathbf1)/(s(z)+\epsilon)$, where
$s(z)^2=(n-1)^{-1}\sum_i(z_i-\bar z)^2$. The audited dual-level compiler is
$N_{t,i}=\tfrac12\operatorname{Std}_\epsilon(G_t)_i+\tfrac12\operatorname{Std}_\epsilon(R)_i$,
where $G_t$ collects discounted turn-$t$ returns and $R$ scalar trajectory rewards. The theorem
concerns this exact compiler: hardness comes from composition, not from finding one matching.

\begin{theorem}[Group standardization is an Ising compiler]
\label{thm:isingcompiler}
Fix rational discount $\gamma=9/10$ and a positive rational stabilizer $\epsilon$ (audited at the
released $10^{-6}$). For every
nonempty unweighted graph $G=(V,E)$ with $m=|V|$, there is a polynomial-size family of literal tool-call
similarity instances such that: each ambiguous assignment fiber has exactly two exact optima; both
preserve the same scalar trajectory reward; every trajectory total is equal; only two deterministic
anchor trajectories carry the queried policy-score derivative; and the compiled gradient coordinate
is
\[
\begin{aligned}
g_G(y)&=C_G+J_G\sum_{(i,j)\in E}y_i y_j,\\
y_i&\in\{-1,+1\},\qquad J_G>0.
\end{aligned}
\]
Adjacent cut sizes stay $\Omega(m^{-14})$ apart in support, so on a finite-bit promise formulation
inverse-polynomial-additive lower support is NP-hard and universal all-optima threshold
certification is coNP-hard.
\end{theorem}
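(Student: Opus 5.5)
The plan is to reduce MAX-CUT, equivalently minimizing $\sum_{(i,j)\in E}y_iy_j$ over $y\in\{-1,+1\}^m$, to lower support of one compiled gradient coordinate. The quadratic coupling comes from the group standard deviation $s(z)$ in $\operatorname{Std}_\epsilon$. The design separates two roles: non-anchor trajectories absorb the tie choices, and anchors read them out. For each vertex $i$ I introduce one ambiguous assignment fiber. A reference tool call ties in literal similarity with two predicted calls placed at different turns. The choice $y_i=\pm1$ then moves a fixed quantum of fine-grained reward between two turns of a non-anchor trajectory. The scalar trajectory reward and every trajectory total are unchanged, as the statement requires. Because the turn-level returns $G_t$ are discounted with $\gamma=9/10$, moving reward between turns changes specific coordinates of the group vector $G_t$ while $R$ stays fixed. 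Consequently $\operatorname{Std}_\epsilon(R)$ is a constant, and every dependence on $y$ passes through $G_t$.

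The next step is to make $s(G_t)^2$ quadratic in $y$ with exactly the edge products. For each edge $(i,j)$ I add a gadget trajectory whose turn-$t$ return depends on both $y_i$ and $y_j$. One way is to give the gadget two tool-call fibers that copy the vertex choices through shared reference structure, so that its return is $a+b(y_i+y_j)$. Squaring this and summing gives $\sum_i y_i^2$ terms, which equal $m$ and are therefore constant, plus $2b^2\sum_E y_iy_j$. The linear terms $\sum_i y_i$ also need to cancel. I would handle this with mirrored gadgets whose returns are $a-b(y_i+y_j)$, so the group mean $\bar G_t$ does not depend on $y$. The centered coordinates of the two anchors are then constant. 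The two deterministic anchor trajectories are the only ones with nonzero policy-score derivative on the queried coordinate, so the compiled coordinate is $\sum_{\text{anchors}}c_k\,(G_{t,k}-\bar G_t)/(s+\epsilon)$ plus a constant. This has the form $\alpha/(\sqrt{\beta+\lambda\sum_E y_iy_j}+\epsilon)$.

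The main obstacle is that this expression is not affine in $\sum_E y_iy_j$. The theorem, however, claims the exact form $C_G+J_G\sum y_iy_j$. I would first try to make the anchor numerator itself carry the edge sum, giving $\alpha_0+\alpha_1 S$, while keeping $s$ independent of $y$. To do that I would add complementary padding trajectories whose turn-$t$ returns move oppositely, so that the total squared dispersion is constant while the anchor-relevant centering shifts. If exact cancellation of the dispersion fails, the fallback is to accept a monotone function $f(S)$ and set $J_G$ through the minimum adjacent gap of $f$. That fallback still gives the hardness conclusion. The rational choices $\gamma=9/10$ and rational $\epsilon$ keep all constants rational with polynomially many bits.

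Finally, for the gap bound I will bound $\alpha$, $\beta$ and $\lambda$ polynomially in $m$. Gadget weights scale like $m^{-k}$ to keep the returns in range, and the derivative of $(\sqrt{\cdot}+\epsilon)^{-1}$ is bounded below on that range. These give adjacent cut values separated by $\Omega(m^{-14})$ in support value, where the exponent comes from the product of the scale factors. Given this separation, additive approximation of lower support to within half the gap decides MAX-CUT under the finite-bit promise, so lower support is NP-hard. The universal statement "every legal optimum has coordinate $\ge\tau$" is the complement of cut $\ge k$, so threshold certification is coNP-hard. The instance size is polynomial: one fiber per vertex, $O(|E|)$ gadget trajectories, and two anchors, each of size $O(1)$.
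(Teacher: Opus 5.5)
\paragraph{Gap 1: copying vertex choices.} Your reduction breaks where an edge gadget trajectory ``copies the vertex choices through shared reference structure.'' Each rollout is matched against its own reference independently. The legal exact-optimum set of a group is therefore the Cartesian product of per-trajectory fibers. A tie inside a separate gadget trajectory is a fresh free binary variable, not a copy of $y_i$, and no similarity design can force two independently matched trajectories to pick corresponding optima. With the gadget signs free, lower support is minimized over them as well, so the quadratic form you derive in $y$ is not the quantity being optimized. Your mirrored gadgets inherit the same defect.

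For the same reason your first repair of non-affinity is unavailable. The anchors' centered numerator is linear in the returns, and each return depends only on its own trajectory's choices, so the numerator can never carry a cross-trajectory product $y_iy_j$.

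The missing idea is to let one binary fiber flip the sign of an entire return profile, and to take the coupling from the group mean rather than from squaring one trajectory's return. The paper uses one ambiguous trajectory per vertex. Its cyclic assignment gadget puts bonuses $C_0+z_t$ on identity edges and $C_0-z_t$ on shifted-cycle edges, so only the identity and the cyclic shift attain $TC_0$, and these produce raw rewards $K\mathbf 1\pm\delta r$. With $r_t=d_t-\gamma d_{t+1}$ and a final compensation turn, this realizes, up to a common offset, discounted returns $\delta y_i\mathbf 1\{i\in e\}$ at every edge turn with zero raw sum. Thus $y_i$ is reused at every incident edge inside one trajectory, and all totals equal $TK$. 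The product $y_iy_j$ then arises as the cross term of $(y_i+y_j)^2/N$ in the sample variance.

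\paragraph{Gap 2: the form of $g_G$.} Pooling all edges into one turn's dispersion gives $\alpha/(\sqrt{\beta+\lambda S}+\epsilon)$. Your fallback, a monotone $f(S)$, would support hardness but not the exact identity $g_G(y)=C_G+J_G\sum_{(i,j)\in E}y_iy_j$ with $J_G>0$ that the theorem asserts. The paper instead gives each edge its own separately standardized turn ($T=M+1$). At turn $e=(i,j)$ the variable group vector is $\delta(y_i,y_j,0,\ldots,0,+1,-1)$, whose variance is $4/(N-1)$ if the edge is cut and $4/N$ otherwise. The anchors' score derivatives $+1,-1$ cancel the centering offset, leaving the contribution $\alpha(v)=\delta/(\delta\sqrt v+\epsilon)$. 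A function of the single binary quantity $y_iy_j$ is automatically affine in it. Summing edge turns therefore gives exactly $C_G+J_G\sum_E y_iy_j$ with $J_G=(\alpha_{\rm uncut}-\alpha_{\rm cut})/2$, and equal totals make the global branch vanish.

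Your $\Omega(m^{-14})$ separation is also asserted rather than derived. In the paper it comes from $L=2TC_0=O(m^6)$, hence $\delta=\Omega(m^{-6})$, together with $2/\sqrt{m+1}-2/\sqrt{m+2}\ge(m+2)^{-3/2}$. A polynomial-bit dyadic midpoint threshold then completes the Karp reduction.
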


\paragraph{Proof idea.}
A cyclic assignment gadget gives each vertex two exact optima $y_i\in\{-1,+1\}$. Write
$M=|E|$, $T=M+1$, and $N=m+2$. At the first $M$ turns prescribe discounted returns
$d^+_{i,e}=\mathbf1\{i\in e\}$ and $d^-_i=-d^+_i$. Set
$r_t=d_t-\gamma d_{t+1}$, $r_{T-1}=d_{T-1}$, and choose
$d_{T-1}=-[d_0+(1-\gamma)\sum_{t=1}^{T-2}d_t]/(1-\gamma)$.
Then each sign realizes the prescribed returns but has zero raw-reward total; two $\pm1$ anchors use
the same compensation. At $\theta=0$, one shared scalar two-action softmax realizes score derivatives
$+1,-1,0$ from realized/alternative feature pairs $(+1,-1),(-1,+1),(0,0)$. Reusing $y_i$ at every
incident edge gives variable return group
$\delta(y_i,y_j,0,\ldots,0,+1,-1)$. Its unscaled variance is
$v_e(y)=[4-(y_i+y_j)^2/N]/(N-1)$, hence
$v_{\rm cut}=4/(N-1)>v_{\rm uncut}=4/N$. Centering removes the common offset, while the two
anchors convert standardization into
$\alpha(v)=\delta/(\delta\sqrt v+\epsilon)$. Equal trajectory totals null the global branch, and
$\alpha_{\rm cut}<\alpha_{\rm uncut}$ gives
$J_G=(\alpha_{\rm uncut}-\alpha_{\rm cut})/2>0$. Summing edges yields the displayed
antiferromagnetic Hamiltonian, whose lower support encodes maximum cut \citep{garey1976simplified}.
For finite-bit source realization, write each rational raw vector as integers $z_t=10r_t$, set
$C_0=(T-1)\max_t|z_t|+1$, and place $C_0+z_t$ on identity cycle edges and $C_0-z_t$ on shifted-cycle
edges. Pair-isolating parameter keys realize similarities $(2+c_{ij})/(L+2)$ with $L=2TC_0$; every
partial assignment improves to a full one, and any other permutation uses at most $T-1$ cycle edges,
so only the identity and cyclic shift attain $TC_0$.
Their reward vectors are $K\mathbf1\pm\delta r$, where $K=(2+C_0)/(L+2)$ and
$\delta=10/(L+2)$, so both totals equal $TK$. Here $T=O(m^2)$, $L=O(m^6)$, and the return scale is
$\Omega(m^{-6})$; the cut/uncut variance gap then
gives adjacent support separation $\Omega(m^{-14})$, so a polynomial-bit dyadic midpoint completes the
Karp reduction. Independently, all $813$ no-isolate graphs through five vertices ($25{,}264$ sign
assignments), $105$ literal call dictionaries, and extracted upstream code verify the three reduction
layers (maximum numerical error $1.61\times10^{-6}$). The result concerns data-dependent shared
standardization, not a fixed external scale or centering alone. App.~A.7 gives the full construction
and gap bound; App.~B.8 gives the conformance ledgers.

\begin{table*}[t]
\centering
{\small
\setlength{\tabcolsep}{4pt}
\begin{tabular}{@{}p{.20\textwidth}p{.24\textwidth}p{.23\textwidth}p{.25\textwidth}@{}}
\toprule
Contract / compiler & Solver-invariant object & Exact method & Boundary \\
\midrule
Declared selector & one assignment and update & assignment + compiler & polynomial, selector-relative \\
Affine on exact faces & compact polytope & LP support; QP closest point & polynomial, all optima \\
Local nonlinear, width $w$ & finite factor graph & min-sum elimination & $\operatorname{poly}(n)q^{w+1}$ \\
Shared group standardizer & coupled exact-tie states & support / universal threshold & NP-hard / coNP-hard \\
\bottomrule
\end{tabular}
}
\caption{Complexity of all-optima auditing under different correspondence contracts and reward
compilers. Affine and low-treewidth regimes admit exact classical algorithms; shared
standardization reaches the hardness boundary in \Cref{thm:isingcompiler}.}
\label{tab:frontier}
\end{table*}

The gap is operational: one exact matching and its update remain polynomial in the construction,
but certifying a threshold over every equally optimal completion is coNP-hard. Ising models, MAX-CUT,
treewidth algorithms, and non-singleton bilevel optimization are not new; the contribution is the
restricted source-level composition that creates this gap. It yields a constructive dispatch rule:
use compact affine certification, exact low-treewidth elimination, or bounds/abstention instead of
treating one cheap run as a certificate. Bounding is also the cheaper semantics: near-optimal
relaxation makes additive auditing NP-hard \citep{handler1980dual}, and exact averaging over an
optimal matching set is \#P-hard \citep{valiant1979permanent,jerrum2004permanent} (App.~D).

\section{Natural Consequences and Exact Repairs}
\label{sec:naturalaudits}
Three audits test whether a readout is constant over all optimal correspondences---a temporal
diagnosis, signed turn-level credit, and an advertised strict trajectory score---and a fourth tests
transport validity. Pre-registered reanalysis, disjoint replication, a failed prospective gate,
post-diagnostic correction, and pre-outcome held-out transfer are never pooled. Together they
establish natural ambiguity and exact repairs, not final-policy effects or natural-case hardness.

\subsection{Hidden Correspondence Dissolves Temporal Claims}
\label{sec:naturalalignment}
\citet{consistencytestable2026} compare base and perturbed agent trajectories with action-token
Levenshtein metrics and weight later edits more heavily because early uncertainty can be expected
while later deviations indicate unreliable execution. The resulting mechanism label depends on the
selected correspondence. We leave their public action
tokens, task outcomes, and unit edit objective unchanged. For each paired trajectory, let
$\mathcal A^\star$ contain every minimum-cost alignment. A downstream diagnosis $h(A)$ is identified
by the metric only when it is constant on this set, $\min_{A\in\mathcal A^\star}h(A)=\max_{A\in\mathcal
A^\star}h(A)$; otherwise its reported value is fixed by a hidden tie-break rather than by the data.
Every optimal path carries the same edit count, so additive $\min$/$\max$ semirings over edge
positions give sharp bounds on the mean position, and the same two-pass program over the
optimal-path DAG decides constancy. We preregister
early ($<.40$), late ($>.50$), and \emph{alignment-unidentified} (the sharp range reaches both
sides), requiring agreement under source-index and symmetric DP-state normalizations.

Across $1{,}699$ released SWE-bench and Spider2-DBT pairs, $1{,}586$ have nonzero edit distance.
The threshold-free result is already substantial: deterministic left- versus right-priority optimal
tracebacks give different temporal labels for $55.9\%$ of non-null pairs, and mean sharp-range width
is $.466$ for Codex versus $.094$ for the heterogeneous-action OpenHands control. Under the
pre-registered early/late rule, $47.5\%$ are alignment-unidentified under both normalizations. The
ambiguity changes an aggregate conclusion, not only selected examples (\Cref{fig:optimalalign}). For
Spider Codex \texttt{header\_shuffle}, the sharp cell-mean range is $[.221,.799]$ rather than a point
near $.217$; for SWE Codex \texttt{linear\_mcp}, it is $[.286,.753]$. The same minimum edit count
therefore supports either localization. The Spider OpenHands control is much tighter,
$[.498,.675]$, and never reaches the early regime.

\begin{figure*}[t]
\centering
\includegraphics[width=.88\linewidth]{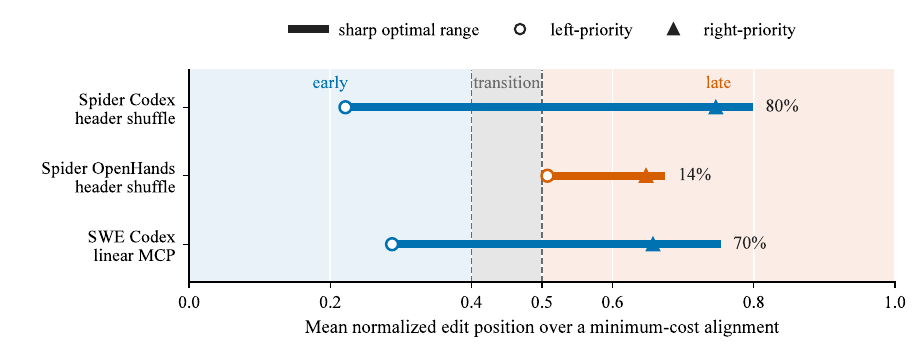}
\caption{Sharp temporal-localization ranges over minimum-cost trajectory alignments. Horizontal
segments show the cell-mean edit position over all per-pair exact optima; symbols denote two
deterministic optimal tracebacks. Percentages report pairs whose ranges span both the early
($<.40$) and late ($>.50$) regimes. Both homogeneous-action Codex cells cross the two regimes,
whereas the heterogeneous-action OpenHands control does not reach the early regime.}
\label{fig:optimalalign}
\end{figure*}

The cleanest witness needs no model at all. Comparing one released M3-Bench trajectory \emph{to
itself} under its global Hungarian correspondence \citep{m3bench2025} yields $32$ exact optima---
repeated identical names and canonicalized arguments hold every permutation of those copies at cosine
one, so the ties are structural, not numerical---whose Order Consistency ranges over the full $[0,1]$
and whose Step Coherence ranges over $[.43,1]$. No model, judge, or sampling noise explains that
spread. A non-identical cross-revision pair spans $[.44,.89]$, and the defect survives both dataset revisions.
Across $260$ task/assignment combinations, our implementation matches the official evaluator
(maximum error $3.79\times10^{-8}$). The auditors are exact for additive statistics on optimal-path
DAGs and explicit optimal-assignment faces; general nonlinear statistics may require enumeration or
certified bounds. This establishes metric non-identification across backends and domains, but not a
model-ranking reversal, because M3-Bench releases no per-model predictions (App.~B.2).

All frozen temporal controls pass: dominant-action share predicts range width (Spearman $\rho=.606$,
$95\%$ CI $[.561,.647]$). A preregistered endpoint convention gave $44.6\%$ ambiguity; after matching
the source paper's state convention, the corrected value is $47.5\%$. Both versions reverse the same
two aggregate cells, and frozen threshold sweeps span $43.2$--$49.4\%$. The audit changes no outcome,
action count, or distance. Full ledgers are in App.~B.1.

\subsection{Hidden Correspondence Reverses Assigned Credit}
\label{sec:matchtir}
MatchTIR \citep{qu2026matchtir} assigns dense turn rewards by bipartite matching between predicted and
ground-truth tool traces, then combines turn- and trajectory-level advantages. Matching objective
optimality does not imply credit identification: tied optima can distribute the same total similarity
over different predicted calls. Before generation we froze the checkpoint, source revision, task rows,
four rollout seeds, temperature $1$, top-$p$ $1$, six turns, $768$ new tokens per turn, the exact
enumerator, and the promotion criteria. The frame holds $100$ rows with repeated
ground-truth tool names and $100$ without; the released $4$B checkpoint produced $800/800$
trajectories. For trajectories with two to eight parsed calls, a bitmask program enumerates every
unique per-call reward vector attaining the exact maximum-weight partial assignment. Frozen
materiality requires two distinct predicted-call JSONs and reward width at least $.10$. In each
four-rollout group, rewards are aggregated into turns, discounted at $.9$, standardized across
rollouts at both turn and trajectory levels, and equally averaged. This is a consequence audit on the
training distribution, not a held-out capability evaluation.

Our primary target is the multi-turn mechanism described in the source. The public quick-start does
not visibly enable its default-off mask: reconstructing that path yields $6/20$ sign-flip groups over
all material drivers, $1/9$ for all-distinct drivers, and $0/7$ after also removing tool errors. We
therefore test whether the source-described compiler identifies local credit on natural rollouts,
without inferring that an undocumented training run enabled the branch or that final policy quality
changed; that last level remains unmeasured.

Under the source-described multi-turn branch, $495$ trajectories are multi-call and $93$ have material
exact-optimum reward width ($18.8\%$, row-clustered $95\%$ CI $[14.1,24.0]\%$); every completion is
exactly optimal. The intended multi-turn advantage reverses sign in $14/20$ affected task groups. As diagnostic coverage, the
all-optima certificate retains $332/437=76.0\%$ of canonical nonzero coordinates; its vector is not
claimed to be a realizable full-gradient update. A task-and-seed-disjoint $800$-rollout audit finds
$14/22$ material groups with a reversal and retains $283/402=70.4\%$ of exact strict signs with zero
false signs. Seven groups contain a peer-only cross-sample sign-reversal witness. The overall frozen
verdict is nevertheless \textsc{Weak}: only $5/11=.455$ witnesses clear both $.05$ margins (required
$.60$), and the frozen risk score has AUC $.589$ (required $.70$).

The peer mechanism has an exact certificate. If independently matched rewards satisfy
$x_i\in[\ell_i,u_i]$, $w_i=u_i-\ell_i$, and $C_i=x_i-n^{-1}\sum_kx_k$, then
$C_i\in[L_i,U_i]$, where
\[
\begin{aligned}
L_i&=((n-1)\ell_i-\textstyle\sum_{j\ne i}u_j)/n,\\
U_i&=((n-1)u_i-\textstyle\sum_{j\ne i}\ell_j)/n,\\
U_i-L_i&=((n-1)w_i+\textstyle\sum_{j\ne i}w_j)/n.
\end{aligned}
\]
The endpoints are attained by independent endpoint choices. Thus even $w_i=0$ inherits peer
uncertainty, while division by a positive empirical scale preserves the centered-credit sign.

\paragraph{The hidden completion is a temporal credit policy.}
The released routine enumerates score edges in predicted-call order, stably sorts them by score, and
greedily accepts them. Predicted calls are collected chronologically, so equal-score competition
inherits an undeclared early-call priority not present in the matching objective. In a separately
pre-registered analysis of the two frozen ledgers, we reversed only the call array presented to this
selector and mapped its rewards back to the unchanged calls. Every reversed selection still attains
the exact objective; every call, turn, and total reward is preserved.

The direction holds in all three public source strata and under aggregation by actual occupied turn,
whereas $128$ fixed random input permutations per trajectory reduce mean positional bias to about
$-.001$. An independent implementation, a bitmask exact solver, and direct execution of the four
relevant upstream functions agree on every retained trajectory, with original and reversed totals
differing by at most $8.9\times10^{-16}$. Thus an order-free objective compiles into an
order-sensitive learning policy solely through its hidden completion.

\paragraph{Pricing the declarable alternatives.}
The chronological selector is reproducible but favors earlier calls by $.196$ $[.149,.246]$ and
$.174$ $[.133,.219]$, with negative slopes on $98.9\%$ and $100\%$ of trajectories. A uniform law over
distinct exact-optimal reward vectors flips $20.1\%$ of canonical nonzero coordinates in expectation;
normalizing before versus after averaging disagrees in sign in $7/20$ and $8/22$ groups. The all-optima certificate instead keeps $76.0\%$ with zero false signs. These are local-credit
non-identification results, not reported-score errors or measured policy-level effects
(App.~B.3--B.6).

\subsection{A Natural Strictness Failure and Exact Repair}
\label{sec:earthstrict}
Earth-Agent \citep{feng2026earthagent} scores reasoning fidelity by ``Tool-Exact-Match''. The source
paper defines it as the longest common prefix over the ground-truth length, which gives a
perfect value to any extension of a complete expected prefix, yet the paper reads the score as
penalizing irrelevant extra steps.

For a nonempty reference $r$ of length $m$ and response $p$ of length $n$, let $\ell(r,p)$ be their
exact name-prefix length. The source score is $s_{\rm ref}=\ell/m$, so
$s_{\rm ref}(r,r\mathbin{\|}z)=1$ for every extra suffix $z$. Our frozen two-sided repair,
$s_2=\ell/\max\{m,n\}=\min\{\ell/m,\ell/n\}$, equals one if and only if the complete name sequences
agree, since $\ell\le\min\{m,n\}$ forces $m=n=\ell$; full-call equality gives the analogous parameter
certificate. This is conservative exactness
accounting, not a claim to a new sequence metric.

Before reading prediction contents, we froze the public commit, the evaluator-defined RGB slice,
26 configurations, the repair, clustered inference, and six decision gates. Across 1,532 natural
task-configuration cells, 218 non-exact traces receive source score one
($14.23\%$, task-clustered $95\%$ CI $[7.58,21.80]\%$). The defect appears in $22/26$
configurations and $12/13$ model families; $26.17\%$ of cells change by at least $.05$. The repair
creates nine pairwise ranking reversals and changes the top configuration from DeepSeek-V3.1 IF to
GPT-4o AP. A source-parity script reproduces every saved score exactly; this CPU-only audit uses no model
generation, LLM judge, or human labels. The top change survives all $59$ leave-one-task-out omissions (App.~B.11).

A post-hoc complete-case check retains defects in $23/26$ configurations and all $13$ families, with
18 pairwise reversals. In held-out MCPEval \citep{liu2025mcpeval}, 15/248 cells are false-perfect and
one pair reverses, but frozen prevalence gates fail and its binary strict-success field already rejects
extra calls. These metric-specific results neither judge extra-call semantics nor end-to-end accuracy.

\paragraph{Natural bodies are nontrivial but decision-redundant.}
Across $571$ legal updates from $15$ low-width tasks, median maximum pairwise angle is
$25.13^\circ$, $10/15$ tasks exceed $15^\circ$, and one reaches $161.90^\circ$. Exact support still
certifies the canonical update's same $14$ common-progress decisions and abstains once; the robust
direction raises worst-case unit margin by median $6.4\%$ and at most $28.3\%$. This post-hoc slice
validates cheap low-width dispatch and certificate quality, not policy improvement (App.~B.7).

\subsection{Pre-Registered Two-Sided Transport Validation}
\label{sec:reanalysis}
To test the map itself, a pre-registered audit froze transport, null threshold $\tau=.05$, inference,
one agent (gpt-5.5 at high reasoning effort, Codex CLI $0.144.5$), and $24$ paired three-stage
trajectories; all $72$ stages completed. Agent-free calibration gives zero
residue on $133$ held-out functions although $T$ moves the region by median $.134$. On pylint, raw
horizon-1 distance $.175$ matches the agent-free residue $.176$ and is $60.7\times$ the largest
within-view seed distance, so a raw point rule reads sensitivity; the transported distance $.019$ is
equivalent at $\tau$, and both substrates appear invariant. The natural-response gate therefore failed. Later controlled and frozen
checks diagnose this failure rather than retroactively pass it: benign reconstruction alone cannot
establish response preservation.

The later gate compares destructive \emph{wrapper stripping}, which removes outer wrapper logic, with
\emph{call-site transport}, which substitutes the helper at its call site while retaining preprocessing,
postprocessing, exceptions, and return flow. Admission requires null-threshold compliance and
preservation of each positive control's signature and positive distance; candidates and selection
rules were frozen before outcomes. We withdrew an earlier ``$8/8$'' cross-repository claim because
four pylint test IDs did not exist, so its corrected witness is post-diagnostic; requests/pytest is
separately pre-outcome held out. Wrapper stripping erases all $8/8$ corrected and $8/8$ held-out
failures, while call-site transport preserves them. A frozen renaming/inversion gate similarly finds
$10/16$ failures erased by destructive maps versus $16/16$ preserved by family-specific maps. On
$36$ natural states, call-site transport preserves every witness, reduces mean distance
$.1163\!\to\!.0244$ ($79.1\%$), and wins all four held-out folds with zero false invariance. Failed,
corrected, and held-out records are never pooled (App.~B.9--B.10).

\section{Related Work}
\label{sec:related}
Unknown-map identification, extremal alignment, and robust matching are established
\citep{gulrajani2022identifiability,lember2014optimal,morucci2022robust}; process conformance also
summarizes all optimal alignments \citep{baer2025alloptimal}. We borrow these tools but target the
sharp downstream diagnosis or compiled update after validating the cross-view map. Probabilistic or
soft matching instead declares a law or modified objective \citep{yousefi2019probpit}; known
tie-breaking impossibilities do not show when a hidden completion becomes a replicated temporal
policy \citep{feys2026tiebreaking}.

Non-singleton lower levels, normalization pathologies, and GRPO gradients are established
\citep{liu2020bilevel,chen2024hypergradient,masiha2026select,liu2026gdpo,fontana2026hidden,
zhou2026grpoustat}. Our source-faithful interface composes these classical convex, MAX-CUT, and
elimination tools \citep{fliege2000steepest,desideri2012mgda,sener2018multiobjective,
jaggi2013frankwolfe,garey1976simplified,koller2009pgm}. Unlike perturbation and metamorphic tests of a
declared statistic \citep{repomirage,mao2023equivariance,murphy2008metamorphic,dbpa2024}, we first
test whether correspondence identifies that statistic or update, using standard partial-identification
machinery \citep{kaido2019projection,scott2016cz}.

\section{Limitations and Conclusion}
The natural gradient audit is post hoc and uses one $2{,}560$-parameter tensor. Our worst-case
construction is synthetic; finite-source executions are conformance checks. We establish neither
final-policy degradation nor universal affectedness and do not know whether an undocumented
MatchTIR run enabled the intended branch. The repaired strict metric is not end-to-end accuracy; the
pre-registered downstream consequence remains \textsc{Weak}. Provider outputs lack immutable digests
and controllable seeds; we test one agent family, one checkpoint, and one compiler family (App.~C).

Within these boundaries, one returned optimum does not justify one conclusion. A correspondence
contract records the objective, exact-optimum set, completion, normalization, and readout, then over
all legal optima reports a point update, common-progress direction, or abstention. Hidden choices
change diagnosis and signed credit under a fixed objective, and exact repair moves a public ranking.

\clearpage
\appendix
\setcounter{figure}{0}
\setcounter{table}{0}
\setcounter{equation}{0}
\renewcommand{\thefigure}{A\arabic{figure}}
\renewcommand{\thetable}{A\arabic{table}}
\renewcommand{\theequation}{A\arabic{equation}}
\renewcommand{\theHfigure}{appendix.\arabic{figure}}
\renewcommand{\theHtable}{appendix.\arabic{table}}
\renewcommand{\theHequation}{appendix.\arabic{equation}}

\paragraph{Scope and navigation.}
Cross-references below use the numbering and labels of the main text. Appendix figures, tables, and
equations carry an A-prefix so that they cannot be confused with main-text Figure~1, Figure~2,
Table~1, or Equations~(1)--(4). Appendix~A restates and proves every formal main-text result.
Appendix~B gives additional experiments and robustness checks, with discovery, diagnostic,
held-out, and post-hoc evidence kept separate. Appendix~C states the exact claim boundaries, and
Appendix~D records the computational boundary of solver-independent auditing. The appendix is
therefore sufficient to check the formal claims without relying on implementation details, while
every empirical result retains its analysis unit, evidence status, and explicit nonclaim.

\noindent\begin{minipage}{\columnwidth}
\paragraph{Main-text claims and complete support.}
\begin{center}
\small
\setlength{\tabcolsep}{3pt}
\begin{tabular}{@{}p{.28\columnwidth}p{.64\columnwidth}@{}}
\toprule
Main-text claim & Complete support and boundary \\
\midrule
Thm.~1 and Eqs.~(1)--(3) & A.1--A.2 give the sharp identified set and complete proof of
one-sided insufficiency, witness control, and the exact linear feasibility boundary. The boundary is
linear; nonlinear maps are validated by declared controls. \\
Props.~1--2 and Eq.~(4) & A.4--A.6: normalization trilemma, randomization order, legal-update
trichotomy, and executable certificate; B.7 tests natural bodies. No final-policy guarantee is
inferred. \\
Thm.~2 and Table~1 & A.7: finite-bit reduction and tractable regimes; B.8: source-level
conformance; D: audit-cost boundary. \\
Natural results & Sec.~4.1/Fig.~2: B.1--B.2; Sec.~4.2: A.3 and B.3--B.6;
Sec.~4.3: B.11; Sec.~4.4: A.2 and B.9--B.10. Table~\ref{tab:evidenceledger}
records denominators, evidence status, and nonclaims. \\
\bottomrule
\end{tabular}
\end{center}
\end{minipage}

\noindent\begin{minipage}{\columnwidth}
\paragraph{Validity and scope map.}
\begin{center}
\small
\setlength{\tabcolsep}{3pt}
\begin{tabular}{@{}p{.38\columnwidth}p{.54\columnwidth}@{}}
\toprule
Potential concern & Decisive check and claim boundary \\
\midrule
Solver bug or numerical near-tie & A.1 and B.1--B.3 use exact-optimum sets; B.3 includes
objective-slack and all-distinct controls. \\
In-sample or backend-specific? & B.2 changes domain and backend; B.4 changes tasks and seeds and
reports every preregistered pass and failure. \\
Does random tie-breaking solve it? & A.5 distinguishes sampling from marginalization; B.5 measures
the resulting noncommutation in two independent samples. \\
Is Thm.~2 only a real-arithmetic sketch? & A.7 gives source realizability, polynomial-bit separation,
and a Karp reduction; B.8 and C delimit the natural-case claim. \\
Coordinate correction changes the headline? & B.1 preserves the frozen endpoint result ($44.6\%$)
beside the source-state result ($47.5\%$); both reverse the same two aggregate cells and the frozen
sweep spans $43.2$--$49.4\%$. \\
MatchTIR branch is default-off & B.3 reports the
reconstructed quick-start separately ($6/20$, $1/9$, $0/7$) and never pools it with the
intended-branch result. The claim is source-realizable non-identification, not deployment
prevalence. \\
Outcome-driven transport or score repair & B.9--B.10 retain the failed, post-diagnostic, and
pre-outcome held-out gates separately; B.11 derives its exact repair from the public strictness
contract before reading prediction contents. \\
No final-policy effect & B.11 supplies the measured operational consequence: nine pairwise model
comparisons and the top-ranked configuration of a public benchmark, surviving all $59$
leave-one-task-out omissions. Section~C explicitly excludes unmeasured training and policy effects. \\
\bottomrule
\end{tabular}
\end{center}
\end{minipage}

\noindent\begin{minipage}{\columnwidth}
\paragraph{Contribution attribution.}
The appendix does not relabel established machinery as new theory. A.1--A.2 use standard
partial-identification and principal-angle arguments; A.6 explicitly attributes minimum-norm
separation to classical multi-objective optimization. The claimed technical novelty is the
correspondence-induced composition: the compiler-dependent frontier and finite-bit construction in
Theorem~2, together with natural audits showing that hidden exact completions alter diagnoses,
signed credit, and a public ranking under fixed primary objectives.
\end{minipage}

\section{Proofs}

\paragraph{A.1\quad Identification relative to transport.}
Let $P$ denote the observed joint law of $(U,V)$ and let $\mathcal P_T$ be the admissible transport
class.
\begin{restated}[Main-text Equations (1)--(2): identification relative to transport]
The sharp population identified set is
$\mathcal I_T(P,\mathcal P_T)=\{\mathbb E_P[d(\Phi(V),U)]:\Phi\in\mathcal P_T\}$.
A singleton class point-identifies the response. If two admissible transports yield different
expectations, the observation law alone cannot point-identify the response associated with the
unknown transport. For compact $\mathcal P_T$ and a continuous functional, the extrema are attained.
\end{restated}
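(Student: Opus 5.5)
The plan is to read the statement as a partial-identification claim over a structural model and to prove its four assertions in order. First I will formalize the structural worlds. A world is a pair $(P,\Phi)$ with $\Phi\in\mathcal P_T$. In it, the observables $(U,V)$ have law $P$, and the target is $J_T(\Phi)=\mathbb E_P[d(\Phi(V),U)]$ from \Cref{eq:transportestimand}. The key observation is that $\Phi$ acts only on the target functional and not on the data-generating law. So every $\Phi\in\mathcal P_T$ is observationally compatible with $P$, and conversely every world compatible with $P$ uses some admissible $\Phi$.

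Sharpness then follows by double inclusion. Each value $\mathbb E_P[d(\Phi(V),U)]$ with $\Phi\in\mathcal P_T$ is attained by the compatible world $(P,\Phi)$, which gives $\supseteq$. Any compatible world reports $J_T(\Phi)$ for its own admissible $\Phi$, which gives $\subseteq$. Thus the set $\mathcal I_T(P,\mathcal P_T)$ in \Cref{eq:transportset} is exactly the set of target values consistent with the observed law, and no smaller set is valid. I will implicitly assume measurability and integrability of $d(\Phi(V),U)$ for each admissible $\Phi$ so that the expectations are defined. If needed, I will state this as a standing condition on $\mathcal P_T$.

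The next two claims are immediate corollaries. If $\mathcal P_T=\{\Phi\}$, the set is the singleton $\{J_T(\Phi)\}$, so the response is point-identified. If $\Phi_1,\Phi_2\in\mathcal P_T$ give different expectations, the worlds $(P,\Phi_1)$ and $(P,\Phi_2)$ induce the same observable law but different targets. Hence no functional of $P$ can equal the target in both worlds, and point identification fails. The same argument shows that more samples from $P$ do not help.

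The last claim, attainment, is where I expect the only real care. It needs a topology on $\mathcal P_T$ under which $\Phi\mapsto\mathbb E_P[d(\Phi(V),U)]$ is continuous; the statement only assumes such continuity. I will take it as given and argue that a continuous real function on a compact set is bounded and attains its infimum and supremum (Weierstrass). Then $\min$ and $\max$ of $\mathcal I_T$ exist and are realized by admissible transports. I will also remark on a sufficient condition for the continuity hypothesis: $\mathcal P_T$ carries pointwise convergence, $d$ is continuous, and $d(\Phi(V),U)$ is dominated uniformly over $\mathcal P_T$, so dominated convergence gives continuity. If additionally $\mathcal P_T$ is connected, the identified set is the full interval between these extrema; I will flag this as an aside rather than part of the claim.
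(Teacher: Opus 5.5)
Your proposal is correct and follows the paper's proof essentially step for step. Worlds that share $P$ but carry different admissible $\Phi$ give sharpness by double inclusion; a singleton class gives a singleton image; two worlds with the same $P$ but different targets defeat every functional of $P$; and Weierstrass on the compact class gives attainment. (The paper also records the concrete witness $U=V=0$, $d(a,b)=|a-b|$, $\Phi_0(z)=z$, $\Phi_1(z)=z+1$, which your conditional argument does not need.)

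One small caution on your optional aside: dominated convergence gives only sequential continuity. It therefore verifies the continuity hypothesis when the pointwise topology on $\mathcal P_T$ is metrizable, for example when $\mathcal Z_T$ is countable or the class is finitely parameterized, but not for the pointwise topology on maps in general.
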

\begin{proof}
The observed law $P$ fixes the distribution of $(U,V)$ but does not select an element of
$\mathcal P_T$. For every admissible $\Phi$, a structural world that shares $P$ and uses $\Phi$ has
response $j(\Phi)=\mathbb E_P[d(\Phi(V),U)]$; conversely every data-compatible world allowed by the
model uses some $\Phi\in\mathcal P_T$. Hence the set of observationally compatible response values is
exactly the displayed image, proving sharpness. A singleton image is point-identified. If
$j(\Phi_0)\ne j(\Phi_1)$, the two worlds have the same $P$ but different target values, so no
measurable function of the observations can distinguish them. The unrestricted case is already
witnessed by deterministic $U=V=0$ on $\mathbb R$, $d(a,b)=|a-b|$, with
$\Phi_0(z)=z$ and $\Phi_1(z)=z+1$, which give responses $0$ and $1$. Attainment of extrema follows
from continuity on a compact class.
\end{proof}
The identified set need not itself be an interval for a nonconvex transport class. The paper reports
its infimum and supremum because threshold decisions depend only on those extrema; that interval is
the sharp interval hull. Randomized mixtures of admissible transports make the image an interval,
but are not required for the non-identification claim.

\emph{Correspondence-contract semantics.}
For objective $Q_x$ with exact optimizer set $\mathcal A_Q^\star(x)$ and readout $h$, the target is
indexed by policy: a declared selector $s$ defines $h(s(\mathcal A_Q^\star))$; a declared law $\pi$
defines $\mathbb E_{A\sim\pi}[h(A)]$; and a solver-invariant scientific claim is represented by the
set $\{h(A):A\in\mathcal A_Q^\star\}$. The first target is algorithmically well-defined without being
a solver-independent property of $x$. The third is sharp when every exact optimum is admissible and
no scientifically justified selector or law further restricts the contract. Replacing $Q$ by a soft
objective changes the estimand rather than selecting within the original hard-optimum set.

\paragraph{A.2\quad Two-sided correspondence validation.}
\begin{restatedtheorem}[Main-text Theorem 1: two-sided validity and its linear boundary]
Benign-only transport calibration does not certify response retention. If a transport is faithful
for an $L$-Lipschitz behavioral witness, transported distance controls witness change. In the linear
case, a nuisance-killing transport with positive response gain exists exactly when the nuisance and
response subspaces intersect only at zero; orthogonal projection has gain equal to the sine of their
smallest principal angle.
\end{restatedtheorem}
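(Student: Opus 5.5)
The plan is to prove the three parts separately, since each uses a different and elementary mechanism.

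\emph{Part (a).} I would give an explicit construction over the unrestricted class. Fix any transport $\Phi$ and any target value of $e_0$ on $\mathcal C_0$, realized for instance by $\Phi$ itself. Choose an untested response pair $(v_0,v_1)$ with $d_T(v_1,v_0)>0$, where neither $v_0$ nor $v_1$ appears as the perturbed component of a null control. Define $\tilde\Phi$ to agree with $\Phi$ everywhere except at $v_1$, and set $\tilde\Phi(v_1)=\tilde\Phi(v_0)$. Then the null leakage $e_0(\tilde\Phi)=e_0(\Phi)$ is unchanged. With $\mathcal G\ni(v_0,v_1)$, the infimum in \eqref{eq:responsegain} is $0$.

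For $q$-faithfulness, choose a witness with $q_T(v_1)\ne q_T(v_0)$. Then $q_0(\tilde\Phi(v_1))=q_0(\tilde\Phi(v_0))$ cannot equal both $q_T(v_1)$ and $q_T(v_0)$, so faithfulness fails at one of the two points. Concretely, I would exhibit this on $\mathbb R$ with $d=d_T=|\cdot|$, $q_T=q_0=\mathrm{id}$, and $\mathcal C_0=\{(0,0)\}$.

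\emph{Part (b).} This is a two-line chain. Faithfulness gives $q_T(v_k)=q_0(\Phi(v_k))$, so the difference $|q_T(v_1)-q_T(v_0)|$ equals $|q_0(\Phi(v_1))-q_0(\Phi(v_0))|$. The Lipschitz property then bounds this by $L\,d(\Phi(v_1),\Phi(v_0))$. If the transported distance is at most $\tau$, the bound $L\tau$ follows.

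\emph{Part (c).} I would prove necessity first. Suppose $0\ne w\in\mathcal N\cap\mathcal R$. Any linear $\Phi$ with $\Phi|_{\mathcal N}=0$ has $\Phi w=0$, so the gain over pairs in $\mathcal R$ is $0$. Here I read restricted gain on $\mathcal R$ as $\inf_{0\ne r\in\mathcal R}\|\Phi r\|/\|r\|$, since for linear $\Phi$ and pairs $(v_0,v_1)$ only the difference $v_1-v_0\in\mathcal R$ matters.

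For sufficiency, take $P=P_{\mathcal N^\perp}$, which kills $\mathcal N$. For $r\in\mathcal R$ we have $\|Pr\|^2=\|r\|^2-\|P_{\mathcal N}r\|^2$. On the unit sphere of $\mathcal R$, which is compact in finite dimension, the continuous map $r\mapsto\|Pr\|$ attains its minimum. That minimum is positive because $Pr=0$ would force $r\in\mathcal N\cap\mathcal R=\{0\}$.

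To identify the minimum, recall that $\cos\theta_{\min}(\mathcal R,\mathcal N)=\max\{\langle r,n\rangle:r\in\mathcal R,\ n\in\mathcal N,\ \|r\|=\|n\|=1\}$. For fixed $r$, the inner maximum over $n$ equals $\|P_{\mathcal N}r\|$. Hence $\min_{\|r\|=1}\|Pr\|^2=1-\cos^2\theta_{\min}=\sin^2\theta_{\min}$.

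The main obstacle I expect is bookkeeping rather than substance. First, the pair-based gain definition must be matched to the subspace formulation in (c). Second, the degenerate cases $\mathcal R=\{0\}$ and $\mathcal N=\{0\}$ need handling under the convention that $\theta_{\min}=\pi/2$ when one subspace is trivial. Third, in (a), "any values of $e_0$" must be read as "for every $\Phi$ there is a modification $\tilde\Phi$ with the same $e_0$." This requires that the untested point lies outside the null-control domain, which I would state as a hypothesis.
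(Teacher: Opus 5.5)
Your proposal is correct and follows the paper's proof in A.2 essentially step for step: the same one-point redefinition $\tilde\Phi(v_1)=\tilde\Phi(v_0)$ for (a), the same faithfulness-plus-Lipschitz chain for (b), and the same kernel argument plus injectivity and compactness for (c). Beyond the paper, you make the principal-angle identification explicit via $\|P_{\mathcal N^\perp}r\|^2=\|r\|^2-\|P_{\mathcal N}r\|^2$, where the paper simply appeals to the definition. Your requirement that $v_0$ lie outside the null-control domain is unnecessary and can be dropped, since only $v_1$ is redefined; the paper in fact takes $v_0$ to be a calibrated null point.
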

Part (c) takes the response subspace $\mathcal R$ to be nonzero. When
$\mathcal N=\{0\}$, we use the standard limiting convention
$\theta_{\min}(\mathcal R,\mathcal N)=\pi/2$, so the identity projection has gain one; a zero
response subspace is vacuous for the positive-response question.
\Cref{fig:alignmenttrapapp} isolates the transport-specific failure modes used by this certificate.

\begin{figure*}[t]
\centering
\begin{minipage}[t]{.315\textwidth}
\centering
\small (a) Raw comparison\par\smallskip
\includegraphics[width=\linewidth]{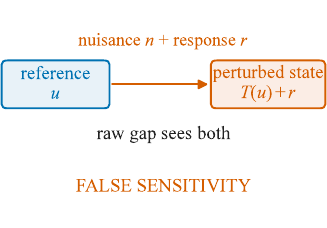}
\end{minipage}\hfill
\begin{minipage}[t]{.315\textwidth}
\centering
\small (b) Benign-only transport\par\smallskip
\includegraphics[width=\linewidth]{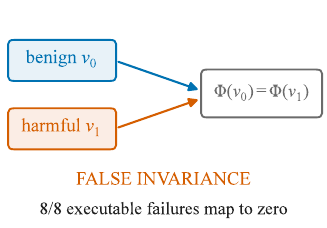}
\end{minipage}\hfill
\begin{minipage}[t]{.315\textwidth}
\centering
\small (c) Two-sided validation\par\smallskip
\includegraphics[width=\linewidth]{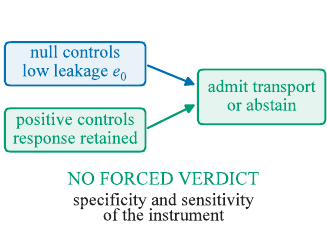}
\end{minipage}
\caption{Transport-specific failure modes underlying two-sided certification. (a)~Raw comparison
conflates nuisance and response. (b)~A map calibrated only on null controls can collapse harmful and
benign states. (c)~Two-sided validation separately tests nuisance leakage and response retention,
admitting a transport only when both criteria pass.}
\label{fig:alignmenttrapapp}
\end{figure*}

\begin{proof}
For one-sided insufficiency, fix any transport values on the null-control points. Let $v_0$ be a
calibrated target-view null point and let $v_1$ be an untested response point. Extend the transport to
$v_1$ by setting $\widetilde\Phi(v_1)=\widetilde\Phi(v_0)$, without changing it on any null control.
The null leakage is unchanged, while the response pair has zero transported distance. Choosing a
witness with $q_T(v_1)\ne q_T(v_0)$ also violates witness faithfulness. Hence no bound on null
leakage alone implies a positive response gain or faithful behavior.

For the witness certificate, faithfulness and Lipschitzness give directly
\begin{align*}
 |q_T(v_1)-q_T(v_0)|
 &= |q_0(\Phi(v_1))-q_0(\Phi(v_0))| \\
 &\le Ld(\Phi(v_1),\Phi(v_0)).
\end{align*}

For linear feasibility, suppose first that $0\ne r\in\mathcal R\cap\mathcal N$. Every linear map
$P$ that kills $\mathcal N$ has $Pr=0$, so its restricted response gain is zero. Conversely, if
$\mathcal R\cap\mathcal N=\{0\}$, orthogonal projection $P_{\mathcal N^\perp}$ is injective on
$\mathcal R$. The unit sphere in finite-dimensional $\mathcal R$ is compact, so the continuous map
$r\mapsto\|P_{\mathcal N^\perp}r\|$ attains a strictly positive minimum. By the definition of
principal angles, that minimum is $\sin\theta_{\min}(\mathcal R,\mathcal N)$. Thus a positive-gain
nuisance-killing map exists exactly under the stated intersection condition.
\end{proof}

\paragraph{A.3\quad Coordinatewise sign agreement as a diagnostic.}
\emph{Coordinatewise sign-agreement rule.}
Fix a canonical optimum $A_0\in\mathcal A^\star$ and a nonzero canonical credit coordinate
$a_i(A_0)$. Among coordinatewise rules that may only keep or zero this coordinate and that never keep
it when its sign or nonzero status changes over $\mathcal A^\star$, the unique pointwise-maximal rule
keeps it iff
\[
\{\operatorname{sign}a_i(A):A\in\mathcal A^\star\}
=\{\operatorname{sign}a_i(A_0)\}.
\]
\begin{proof}
If the displayed equality holds, keeping the coordinate satisfies the required correspondence-uniform
sign and nonzero condition. If it fails, some exact optimum gives either zero or a sign different from
the canonical sign, so every valid keep-or-zero rule must zero that coordinate. Applying this argument
independently to every coordinate proves feasibility and componentwise maximality. Uniqueness follows
because any other pointwise-maximal rule must keep every feasible coordinate and zero every infeasible
one.
\end{proof}
This rule concerns a declared set of exact optima. It neither chooses a probability law over
that set nor guarantees final-policy improvement. The retained coordinate vector need not be an
element of the legal update body, so the rule is a diagnostic sign certificate rather than an
automatically valid full-gradient update. A distributional soft assignment answers a different
question.

\paragraph{A.4\quad Relative-credit normalization tradeoffs and constructive bounds.}
Let $\mathbf1$ be the all-ones vector and $V=\mathbf1^\perp$. Current-peer isolation means
$\partial N_i/\partial x_j=0$ for every $j\ne i$.
\begin{restatedtheorem}[Main-text Proposition 2, normalization clauses, plus a symmetric-peer corollary]
Let $N$ be $C^1$ on a connected, translation-stable domain of nonconstant $n$-sample reward groups,
$n\ge3$.
\emph{(i)} If $N(x+c\mathbf1)=N(x)$ and $N$ is current-peer isolated, then $N$ is constant.
\emph{(ii)} If the restriction of $N$ to $V\setminus\{0\}$ is exactly scale neutral,
$N(ay)=N(y)$ for $a>0$, then $DN(ay)=a^{-1}DN(y)$. A nonconstant $N$ is not uniformly Lipschitz
near zero dispersion and has no continuous nonconstant extension to the constant-reward line.
\emph{(iii)} If $N$ is translation neutral and permutation equivariant, then at
$x=(u,v,\ldots,v)$, writing $\alpha=\partial N_1/\partial x_1$, every $j\ne1$ satisfies
$\partial N_1/\partial x_j=-\alpha/(n-1)$.
\end{restatedtheorem}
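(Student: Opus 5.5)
I will prove the three clauses separately by differentiating each invariance identity and reading off constraints on the Jacobian $DN$.

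\emph{Clause (i).} Fix $x$ in the domain. Translation stability means $x+c\mathbf1$ stays in the domain for all small $c$. Differentiating $N(x+c\mathbf1)=N(x)$ at $c=0$ gives $DN(x)\mathbf1=0$, that is, $\sum_j\partial N_i/\partial x_j=0$ for every $i$. Current-peer isolation kills every off-diagonal entry, so this row identity reduces to $\partial N_i/\partial x_i=0$. Hence $DN\equiv0$ on the domain. Since the domain is connected and open and $N$ is $C^1$, a vanishing derivative forces $N$ to be constant, by path-connectedness and the mean value theorem along polygonal paths.

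The one point to check is that ``connected'' is meant for an open set, so that connected implies polygonally connected. I would state this convention explicitly; otherwise I would argue on each open component.

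\emph{Clause (ii).} Fix $y\in V\setminus\{0\}$ and $a>0$, and differentiate $N(ay)=N(y)$ with respect to $y$ along directions in $V$. The chain rule gives $a\,DN(ay)=DN(y)$ on $V$, which is the claimed identity. If the ambient domain is translation-stable and $N$ is translation neutral, the $\mathbf1$ direction is killed anyway. Otherwise I would state the identity for the restriction to $V$, as the clause does.

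Next, nonconstancy on the ray-stable set gives a point $y_0$ and a direction $h\in V$ with $DN(y_0)h\ne0$. Then $\|DN(ay_0)\|\ge |DN(y_0)h|/(a\|h\|)\to\infty$ as $a\to0^+$, so $N$ is not uniformly Lipschitz near zero dispersion.

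For the extension claim, scale neutrality makes $N$ constant along each open ray. Suppose $N$ had a continuous extension to $0$ (equivalently, to the constant-reward line after quotienting by $\mathbf1$). Every ray converges to $0$, so every ray value would have to equal the extension value at $0$. Hence $N$ would be constant on the ray-stable set, a contradiction.

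The main obstacle is precisely this step: extracting an honest statement from ``nonconstant''. A scale-neutral map can be nonconstant only across rays, so non-extendability is the clean consequence. Failure of uniform Lipschitzness needs nonconstancy together with a nonzero derivative in some direction, and this follows from $C^1$ plus nonconstancy on a connected open subset of the sphere of directions. If the subdomain is not open in $V$, I would restrict the Lipschitz claim to directions in which it is open.

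\emph{Clause (iii).} Let $x=(u,v,\ldots,v)$. Take any transposition $\sigma$ of two indices $j,k\ne1$. It fixes $x$, and permutation equivariance $N(Px)=PN(x)$ differentiated at $x$ gives $DN(x)P=P\,DN(x)$. Reading row $1$, which $P$ fixes, yields $\partial N_1/\partial x_j=\partial N_1/\partial x_k$; call this common value $\beta$.

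Translation neutrality, as in clause (i), gives the row-sum identity $\alpha+(n-1)\beta=0$, hence $\beta=-\alpha/(n-1)$. This uses $n\ge3$ only so that off-diagonal peers exist to be swapped; for $n=2$ the identity holds trivially.
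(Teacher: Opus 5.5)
Your proposal is correct and matches the paper's proof essentially step for step. In (i) you combine $DN(x)\mathbf1=0$ with a diagonal Jacobian and connectedness. In (ii) you differentiate scale neutrality to get $aDN(ay)=DN(y)$, let $a\downarrow0$ along a ray with nonzero derivative for the Lipschitz claim, and take limits along every ray for the extension claim. In (iii) you use peer permutations fixing $(u,v,\ldots,v)$ together with the row-sum identity.

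The only point worth making explicit is the connectedness you invoke in (ii). The paper takes it from $V\setminus\{0\}$ being connected when $\dim V=n-1\ge2$, which is where $n\ge3$ actually enters; it is not needed for (iii). It also follows from your hypotheses, because the continuous projection along $\mathbf1$ maps the connected translation-stable domain onto its intersection with $V$.
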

\begin{proof}
For (i), differentiate translation neutrality in $c$ to obtain $DN(x)\mathbf1=0$. Peer isolation makes
$DN(x)$ diagonal, so every diagonal entry is zero. Hence $DN=0$, and connectedness makes $N$
constant. For (ii), differentiating $N(ay)=N(y)$ with respect to $y$ gives
$aDN(ay)=DN(y)$. Since $V\setminus\{0\}$ is connected for $n\ge3$, a nonconstant $C^1$ map has a
nonzero derivative somewhere; its norm then diverges as $a^{-1}$ along that ray. A continuous
extension at zero would instead give $N(y)=\lim_{a\downarrow0}N(ay)=N(0)$ for every $y$, a
contradiction. For (iii), permutations of coordinates $2,\ldots,n$ fix $x$, so equivariance makes
all first-row peer derivatives equal to some $\beta$. Origin neutrality gives
$\alpha+(n-1)\beta=0$.
\end{proof}

These tradeoffs are class-level: inverse-dispersion instability and peer influence are not peculiar to
the standard-deviation formula. It also separates three repairs that are sometimes conflated. Let
$P=I-\mathbf1\mathbf1^\top/n$, $y=Px$, and
\[
N_\tau(x)=\frac{y}{\sqrt{\|y\|^2/(n-1)+\tau^2}},\qquad \tau>0.
\]
Its Jacobian on $V$ has eigenvalue $1/s$ orthogonal to $y$ and $\tau^2/s^3$ along $y$, where
$s^2=\|y\|^2/(n-1)+\tau^2$; the common direction is annihilated. Therefore
$\|DN_\tau(x)\|_{2\to2}\le1/\tau$ globally. Damping preserves translation neutrality and caps magnitude
amplification, but deliberately gives up exact scale neutrality near ties and does not certify a
sign. By contrast, $(x_i-b)/\sigma$ with an independently estimated $b$ and $\sigma>0$ is
$1/\sigma$-Lipschitz and has zero current-peer derivatives, but replaces within-group origin and
scale neutrality by covariance with a declared external anchor. Finally, the all-optima mask in
A.3 addresses the upstream set-valued fiber: it retains only signs shared by every exact optimum.
These guarantees are complementary, not claims that one normalizer is universally optimal.

\paragraph{A.5\quad Randomization order and the credit estimand.}
Let $r(A)$ be the reward vector induced by exact optimum $A\in\mathcal A^\star$, let $\pi$ be a
declared law over exact optima, and let $N$ denote the downstream relative normalizer acting on the
joint reward vector.
\begin{restated}[Main-text Proposition 2, randomization clause: normalization noncommutation]
The two randomized-credit targets
\[
 \begin{aligned}
 C_{\mathrm{ENA}}&=\mathbb E_{A\sim\pi}[N(r(A))],\\
 C_{\mathrm{NAE}}&=N\!\left(\mathbb E_{A\sim\pi}[r(A)]\right)
 \end{aligned}
\]
need not agree. They agree for every finitely supported law on the reward fiber if and only if $N$
is affine on its convex hull. Hence a random tie-breaking law alone does not identify credit unless
the compiler also declares whether matching is sampled before normalization or marginalized before
normalization.
\end{restated}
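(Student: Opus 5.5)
The plan has two directions. For sufficiency, assume $N$ is affine on $H=\operatorname{conv}\{r(A):A\in\mathcal A^\star\}$. Then for any finitely supported law $\pi$, the point $\mathbb E_\pi[r(A)]$ is a finite convex combination of points of $H$. Affinity on $H$ means precisely that $N$ commutes with finite convex combinations there, so $N(\mathbb E_\pi r)=\mathbb E_\pi N(r)$. Hence $C_{\mathrm{ENA}}=C_{\mathrm{NAE}}$ for every such $\pi$. Before using this I will state the convention that ``affine on a convex set'' means $N(\sum\lambda_k z_k)=\sum\lambda_k N(z_k)$ for all finite convex combinations. This is equivalent to the existence of an affine map agreeing with $N$ on $H$, since the restriction extends to the affine hull.

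For necessity, assume agreement for every finitely supported law. The first step uses only laws supported on fiber points. Taking two-point laws $\pi=\lambda\delta_{A}+(1-\lambda)\delta_{B}$ gives
\[
N\bigl(\lambda r(A)+(1-\lambda)r(B)\bigr)=\lambda N(r(A))+(1-\lambda)N(r(B)),
\]
so $N$ is affine along every segment joining two fiber points; this is the ``segment preservation'' mentioned after Proposition~\ref{prop:normalizationtrilemma}. Taking general finite laws gives the same identity for arbitrary convex combinations of fiber points $r(A_1),\dots,r(A_k)$. Since every point of $H$ is such a combination, by Carathéodory or simply by the definition of the convex hull of a finite set, this identity is exactly affinity of $N$ on $H$ in the sense above.

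The main obstacle is pinning down what ``affine on its convex hull'' means, and checking that the conclusion is consistent. A point $z\in H$ may have several representations as a convex combination of fiber points. The hypothesis then forces $N(z)$ to equal each corresponding weighted average of the $N(r(A_k))$, so all representations must give the same value. I will handle this by defining $\ell(\sum\lambda_k r(A_k))=\sum\lambda_k N(r(A_k))$ and noting that the hypothesis makes $\ell=N$ on $H$. This shows $\ell$ is well defined on $H$ and commutes with convex combinations, which by the standard argument extends to an affine map on $\operatorname{aff}H$. I will also give a short example to justify ``need not agree''. Take $N=\operatorname{Std}_\epsilon$ with $n=3$ and a two-point fiber $r(A)=(1,0,0)$, $r(B)=(0,1,0)$ under the uniform law. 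Here the dispersion of the average is smaller than the average of the dispersions, so the two sides differ. The final sentence of the statement, that a random tie-breaking law alone does not identify credit, then follows directly from this example.
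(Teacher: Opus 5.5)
Your proposal is correct and follows the paper's proof in A.5. Affinity preserves finite convex combinations, which gives equality; conversely, equality for every finitely supported law says $N$ maps each convex combination of fiber points to the same combination of their images, and that is affinity on the hull. Your well-definedness check and your explicit $\operatorname{Std}_\epsilon$ witness add detail the paper leaves implicit in its remark that sample-dependent rescaling is non-affine: in your example both targets share the centered direction $(1/6,1/6,-1/3)$ but divide it by $1/\sqrt3+\epsilon$ versus $1/(2\sqrt3)+\epsilon$.
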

\begin{proof}
If $N$ is affine, it preserves every finite convex combination, giving equality. Conversely, suppose
the equality holds for every finitely supported $\pi$. Every point in the convex hull of the reward
fiber is a finite convex combination of fiber points, and the displayed equality says that $N$ maps
each such combination to the same combination of its images. This is precisely affinity on that
convex hull. Sampling an optimum and then normalizing computes $C_{\mathrm{ENA}}$; averaging rewards
over the fiber and then normalizing computes $C_{\mathrm{NAE}}$. Without affinity or a declared
order, the two are distinct estimands.
\end{proof}
Standard centering followed by division by a sample-dependent scale is non-affine. Equality can still
hold accidentally for a particular fiber or law; the proposition says it cannot be presumed from
randomization itself. This is an application of the elementary preservation-of-mixtures criterion,
not a claim of a new general theorem about nonlinear expectations.

\paragraph{A.6\quad Classical separation applied to the legal update body.}
\label{app:updatebody}
For a fixed parameter value $\theta$, write
$\mathcal U_\theta=\{g_\theta(A):A\in\mathcal F\}$ and
$\mathcal K_\theta=\operatorname{conv}(\mathcal U_\theta)$. The exact fiber is finite in the audited
pipelines; the same statements hold for compact fibers and continuous $g_\theta$.
The following is the classical minimum-norm separation characterization applied to this
correspondence-induced body \citep{fliege2000steepest,desideri2012mgda,sener2018multiobjective}.
\begin{restatedtheorem}[Main-text Proposition 1: solver-independent update trichotomy]
\emph{(i)} The update is identified iff $\mathcal K_\theta$ is a singleton.
\emph{(ii)} A unit vector $u$ and margin $m>0$ satisfying
$\inf_{g\in\mathcal K_\theta}\langle u,g\rangle\ge m$ exist iff
$0\notin\mathcal K_\theta$. If
$x^\star=\arg\min_{x\in\mathcal K_\theta}\|x\|$, then
$u^\star=x^\star/\|x^\star\|$ has margin at least $\|x^\star\|$.
\emph{(iii)} If $0\in\mathcal K_\theta$, no strict common-progress direction exists.
\end{restatedtheorem}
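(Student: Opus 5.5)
The plan is to treat $\mathcal K_\theta$ as a nonempty compact convex set in a finite-dimensional Euclidean space. This holds because it is the convex hull of the finite (or compact) image $\mathcal U_\theta$. Each part then reduces to a standard fact about closest points, and I would prove the parts in order.

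\emph{Part (i).} I read ``the update is identified'' as: every legal exact optimum compiles to the same $g_\theta(A)$. If $\mathcal U_\theta=\{g\}$, then $\mathcal K_\theta=\{g\}$. Conversely, the extreme points of $\mathcal K_\theta$ lie in $\mathcal U_\theta$, and $\mathcal U_\theta\subseteq\mathcal K_\theta$. So $\mathcal K_\theta$ is a singleton exactly when $\mathcal U_\theta$ is. This is a one-line set-theoretic check.

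\emph{Part (ii).} This part carries the content. For the direction ``$\Rightarrow$'': if some unit $u$ has $\langle u,g\rangle\ge m>0$ on $\mathcal K_\theta$, then $0$ cannot lie in $\mathcal K_\theta$, since $\langle u,0\rangle=0<m$. For the converse:
\begin{itemize}
\item Compactness and strict convexity of the norm give a unique minimizer $x^\star$ of $\|x\|$ over $\mathcal K_\theta$, and $x^\star\neq0$ because $0\notin\mathcal K_\theta$.
\item I would then invoke the variational inequality for projecting $0$ onto a convex set. For any $g\in\mathcal K_\theta$ and $t\in[0,1]$, the point $x^\star+t(g-x^\star)$ lies in $\mathcal K_\theta$. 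Differentiating $\|x^\star+t(g-x^\star)\|^2$ at $t=0^+$ and using minimality gives $\langle x^\star,g-x^\star\rangle\ge0$, i.e.\ $\langle x^\star,g\rangle\ge\|x^\star\|^2$.
\item Dividing by $\|x^\star\|$ shows that $u^\star=x^\star/\|x^\star\|$ has margin at least $\|x^\star\|>0$. This proves both the existence claim and the explicit certificate.
\end{itemize}
Because the margin is stated with an infimum over the whole body, I would add one remark. By linearity of $g\mapsto\langle u,g\rangle$, the infimum over $\mathcal K_\theta$ equals the minimum over $\mathcal U_\theta$, so the certificate applies to every legal optimum.

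\emph{Part (iii).} This is the contrapositive of the easy direction of (ii). If $0\in\mathcal K_\theta$, then for every $u$, $\inf_{g\in\mathcal K_\theta}\langle u,g\rangle\le\langle u,0\rangle=0$, so no strict direction exists. I would also record the stronger form for the finite case. Writing $0=\sum_A\lambda_A g_\theta(A)$ with convex weights, any $u$ gives $\sum_A\lambda_A\langle u,g_\theta(A)\rangle=0$. Hence some legal optimum satisfies $\langle u,g_\theta(A)\rangle\le0$, meaning no direction makes strict first-order progress on every legal objective.

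\emph{Main obstacle.} No step is genuinely hard; the only care needed is in (ii). First, existence and nonzeroness of $x^\star$ must use closedness of $\mathcal K_\theta$. Convexity is also essential to the variational inequality, which is why the statement is phrased for the hull rather than for $\mathcal U_\theta$. Finally, for the infinite compact-fiber extension I would check that continuity of $g_\theta$ makes $\mathcal U_\theta$ compact, and that the convex hull of a compact set in finite dimensions is compact by Carathéodory's theorem.
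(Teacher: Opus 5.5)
Your proposal is correct and follows essentially the same route as the paper. The paper treats (i) as definitional; your singleton check relating $\mathcal U_\theta$ and $\mathcal K_\theta$ makes that explicit. For (ii) you use the same projection optimality condition $\langle g-x^\star,x^\star\rangle\ge0$ divided by $\|x^\star\|$, and for (iii) the same convex-combination contradiction $0=\langle u,\sum_j\lambda_j g_j\rangle$, so your remarks on compactness of the hull and on reducing the infimum to the legal optima are sound additions rather than departures.
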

\begin{proof}
Part (i) is the definition of solver-invariant point identification. If $0\notin\mathcal K_\theta$,
compact convex separation gives a strict separating direction. More explicitly, the projection
optimality condition for $x^\star$ is
$\langle g-x^\star,x^\star\rangle\ge0$ for every
$g\in\mathcal K_\theta$. Dividing by $\|x^\star\|$ yields
$\langle u^\star,g\rangle\ge\|x^\star\|>0$. Conversely, if
$0=\sum_j\lambda_jg_j\in\mathcal K_\theta$ and one $u$ had
$\langle u,g\rangle>0$ for every legal $g$, then
$0=\langle u,\sum_j\lambda_jg_j\rangle>0$, a contradiction.
\end{proof}

If every legal compiled objective $F_A$ is $L$-smooth at $\theta$ and
$\nabla F_A(\theta)=g_\theta(A)$, the ascent lemma gives
\[
F_A(\theta+\eta u)\ge F_A(\theta)+
\eta\langle u,g_\theta(A)\rangle-\frac{L\eta^2}{2}.
\]
Thus a certified margin $m$ improves every legal objective for
$0<\eta<2m/L$. This is a local common-improvement statement, not a claim about an entire training
trajectory.

The closest point can be approximated from a linear minimization oracle. At iterate $x\ne0$, let
$s(x)\in\arg\min_{g\in\mathcal K_\theta}\langle x,g\rangle$ and define the
Frank--Wolfe gap
$\zeta=\langle x,x-s(x)\rangle$. Then
\[
\min_{g\in\mathcal K_\theta}
\left\langle\frac{x}{\|x\|},g\right\rangle
=\frac{\|x\|^2-\zeta}{\|x\|}
=\|x\|-\frac{\zeta}{\|x\|}.
\]
Consequently, $\zeta<\|x\|^2$ is an executable strict-progress certificate at any iteration.
Fully corrective Frank--Wolfe/Gilbert optimization over the active oracle vertices is the standard
procedure used in our implementation \citep{gilbert1966minnorm,jaggi2013frankwolfe}; no new
convergence rate is claimed. Each iteration adds one support-oracle vertex and then reoptimizes the
weights over the active hull. Because the audited natural gradients have norm near $10^{-3}$, the
points are first divided by their median norm; the run then stops when the Frank--Wolfe gap
satisfies $\zeta\le10^{-8}$ on that rescaled problem, with an iteration cap of $1{,}000$. All
$15$ audited task bodies met the gap criterion, using at most $9$ iterations, so no reported
dispatch decision was truncated by the cap.

\paragraph{A.7\quad Compiler tractability frontier.}
\label{app:compilerfrontier}
This section gives the complete proof of main-text Theorem~2 (the main text) and the
tractability statements summarized in main-text Table~1 (p.~5). It first establishes the affine and
bounded-interaction regimes, then gives a source-realizable finite-bit reduction for shared
standardization. The reduction is a worst-case compiler result; B.8 and C state the implementation
checks and natural-case limits separately.

\begin{restatedtheorem}[Main-text Theorem 2: group standardization is an Ising compiler]
Fix rational discount $\gamma=9/10$ and a positive rational stabilizer $\epsilon$. For every nonempty
unweighted graph $G=(V,E)$ with $m=|V|$, there is a polynomial-size family of literal tool-call
similarity instances such that each ambiguous assignment fiber has exactly two exact optima, both
preserve the same scalar trajectory reward, every trajectory total is equal, only two deterministic
anchor trajectories carry the queried policy-score derivative, and the compiled gradient coordinate
is
\[
\begin{aligned}
g_G(y)&=C_G+J_G\sum_{(i,j)\in E}y_i y_j,\\
y_i&\in\{-1,+1\},\qquad J_G>0.
\end{aligned}
\]
Adjacent cut sizes stay $\Omega(m^{-14})$ apart in support. On the corresponding finite-bit promise
formulation, inverse-polynomial-additive lower support is NP-hard and universal all-optima threshold
certification is coNP-hard.
\end{restatedtheorem}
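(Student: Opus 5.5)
The proof will be a layered reduction from MAX-CUT. First the compiler algebra is built over abstract $\pm1$ choices. Then it is realized by literal similarity instances with exactly two optima per fiber. Finally the cut/uncut gap is quantified to get a finite-bit Karp reduction.

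\emph{Layer 1 (compiler algebra).} Given $G=(V,E)$ with $M=|E|$, set $T=M+1$ turns and $N=m+2$ trajectories per group: one per vertex plus two anchors.
\begin{itemize}
\item For vertex $i$ with sign $y_i$, prescribe discounted returns $d^{y_i}_{i,e}=y_i\mathbf1\{i\in e\}$ at turn $e\le M$.
\item Recover raw rewards by $r_t=d_t-\gamma d_{t+1}$. Use the last turn $T-1$ as a compensator $d_{T-1}=-[d_0+(1-\gamma)\sum_{t=1}^{T-2}d_t]/(1-\gamma)$, which makes every raw-reward total zero.
\item Equal trajectory totals force the trajectory-level $\operatorname{Std}_\epsilon(R)$ branch to vanish or be constant, so only the turn branch matters.
\item The two anchors carry returns $+1,-1$ at every edge turn. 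At $\theta=0$ a two-action softmax with feature pairs $(\pm1,\mp1)$ and $(0,0)$ gives anchor score derivatives $\pm1$ and zero for vertex trajectories.
\end{itemize}
The queried coordinate is then $\sum_e\tfrac12[\operatorname{Std}_\epsilon(G_e)_{\rm a^+}-\operatorname{Std}_\epsilon(G_e)_{\rm a^-}]$. Turn $e=(i,j)$ has group $\delta(y_i,y_j,0,\dots,0,+1,-1)$. Centering shifts both anchors equally, so the anchor difference is $2\delta/(\delta\sqrt{v_e}+\epsilon)$ up to the centering term, which cancels. The variance computes as $v_e=[4-(y_i+y_j)^2/N]/(N-1)$, which depends only on $y_iy_j$, since $(y_i+y_j)^2=2+2y_iy_j$. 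Hence each edge term is affine in $y_iy_j$, namely $c+J y_iy_j$ with $J=(\alpha_{\rm uncut}-\alpha_{\rm cut})/2>0$, by monotonicity of $\alpha(v)=\delta/(\delta\sqrt v+\epsilon)$ and $v_{\rm cut}>v_{\rm uncut}$. Summing over edges gives $g_G$. Lower support is $C_G+J_G(M-2\,\mathrm{maxcut})$.

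\emph{Layer 2 (source realization).} This layer builds the literal tool-call instances.
\begin{itemize}
\item Scale rewards to integers $z_t=10r_t$ using $\gamma=9/10$ and rational $\epsilon$. Set $C_0=(T-1)\max|z_t|+1$.
\item Build a $T\times T$ cyclic gadget: identity edges get weight $C_0+z_t$, cyclic-shift edges get $C_0-z_t$, and all other pairs are isolated by distinct parameter keys, giving similarity $2/(L+2)$ or lower.
\item Convert weights to similarities $(2+c)/(L+2)$.
\end{itemize}
Optimality then follows from a permutation argument. Partial assignments extend to full ones. Any permutation other than the identity and the shift uses at most $T-1$ cycle edges, so its weight is at most $(T-1)(C_0+\max|z|)<TC_0$, by choice of $C_0$. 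The identity and the shift both total $TC_0$ because $\sum z_t=0$. This yields exactly two optima with reward vectors $K\mathbf1\pm\delta r$, so each fiber has two optima, both with total $TK$. The additive $K$ is removed by centering, and $\delta$ rescales the returns.

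\emph{Layer 3 (gap and hardness).} Track magnitudes: $T=O(m^2)$, $\max|z|=O(1)$ per turn but $d_{T-1}=O(T)$, $C_0=O(m^4)$, $L=O(m^6)$, and $\delta=\Omega(m^{-6})$. Bound $J_G$ from below by the mean value theorem on $\alpha$. Here $|\alpha'(v)|\gtrsim\delta^2/(\delta\sqrt v+\epsilon)^2\cdot v^{-1/2}$, and $v_{\rm cut}-v_{\rm uncut}=4/(N(N-1))=\Omega(m^{-2})$. With $\epsilon$ fixed, $J_G=\Omega(\delta^2 m^{-2})$, which is about $m^{-14}$ after accounting for the $\sqrt v$ and compensator scales. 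Adjacent cut sizes change support by $2J_G$.

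Given this, deciding whether $\min g_G\le$ a dyadic midpoint between consecutive levels, encoded with polynomially many bits, decides MAX-CUT$\ge k$. Any inverse-polynomial-additive approximation to lower support below $J_G$ does the same, which gives NP-hardness. The complement, "every optimum has $g>\tau$", is coNP-hard.

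\emph{Main obstacle.} I expect the explicit polynomial gap bound to be the hardest step. The compensator turn and anchor variance shift all scales, and one must verify that $\epsilon$ does not swamp $\delta\sqrt v$ in a way that kills polynomiality; it only shrinks $J_G$ polynomially. Checking that the vertex trajectories' compensator turn contributes only a $y$-independent constant is a secondary obstacle. Every trajectory's sign pattern flips $d_{T-1}$, so I would place the compensator at a turn whose score derivatives are zero or make its variance $y$-symmetric.
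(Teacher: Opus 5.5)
Your proposal is correct and follows the paper's construction essentially step for step: edge-indexed returns with a compensating last turn, $\pm1$ anchors realized by a two-action softmax at $\theta=0$, cut/uncut variances $4/(N-1)$ versus $4/N$, the identity/cyclic-shift gadget with $C_0=(T-1)R+1$ and similarities $(2+c)/(L+2)$, and a dyadic-midpoint Karp reduction from MAX-CUT. Your mean-value bound $J_G=\Omega(\delta^2m^{-2})$ plays the role of the paper's inequality $2/\sqrt{m+1}-2/\sqrt{m+2}\ge(m+2)^{-3/2}$, and giving the anchors zero score derivative at the compensator turn is the right way to justify the paper's edge-only sum (the paper leaves this implicit in ``$(0,0)$ elsewhere''); you should still add diagonal-only agreement counts for the anchors so their optimum is unique, and compute the threshold via polynomial-precision dyadic enclosures of the square roots.
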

\emph{Affine exact-face formulation.}
Let fiber $i$ be the set of integral exact optima of a bipartite assignment problem. Its convex hull
$P_i$ has the usual row/column assignment constraints plus the equality fixing the optimal objective
value. Assignment integrality implies that the vertices of $P_i$ are exactly the integral exact
optima. If the compiled update is affine in the product variable
$z=(z_1,\ldots,z_k)\in P_1\times\cdots\times P_k$,
\[
g(z)=c+Mz,
\]
then the legal update body is exactly $g(P_1\times\cdots\times P_k)$. Directional support is an LP
over this compact extended formulation, and the closest point to the origin is a convex QP (or its
epigraph SOCP). If every $P_i$ is a segment with endpoints $p_i^-,p_i^+$, then
\[
\mathcal K=c'+\sum_i[-v_i,v_i],\qquad
v_i=\frac{M(p_i^+-p_i^-)}2,
\]
a translated zonotope. These are standard polyhedral facts; the contribution is identifying the
exact correspondence face as the uncertainty source consumed by the reward compiler.

\emph{Local nonlinear factorization.}
Let finite fiber state $a_i$ range over $\mathcal A_i$, $|\mathcal A_i|\le q$. If turn $t$ depends
only on states $a_{S_t}$, any fixed projected compiler query has the exact form
\[
g(a_1,\ldots,a_k)=c+\sum_t\psi_t(a_{S_t}).
\]
Join two variables when they co-occur in a factor. Given a tree decomposition of width $w$, assign
each factor to a bag containing its scope and eliminate variables by the standard min-sum recursion.
Every intermediate table has at most $q^{w+1}$ entries, giving time
$\operatorname{poly}(n)q^{w+1}$ and memory
$\operatorname{poly}(n)q^w$; replacing min by max gives upper support. This is standard graphical
model variable elimination \citep{koller2009pgm}. Backpointers recover the minimizing exact
correspondence required by the support-oracle method.

\begin{proof}[Proof of Main-text Theorem 2.]
\emph{Restricted Ising representation.}
We now prove the non-affine converse used in the main-text Ising compiler theorem. Let
$G=(V,E)$ be a nonempty unweighted graph. Isolated vertices can be deleted. Write
$m=|V|$, $M=|E|$, $T=M+1$, and $N=m+2$. There is one ambiguous trajectory per vertex and two
deterministic anchor trajectories.

Index the first $M$ turns by edges. For vertex $i$, prescribe desired discounted returns
$d^+_{i,e}=1$ when $e$ is incident to $i$ and $0$ otherwise; define $d^-_i=-d^+_i$.
The final desired return is chosen so that the undiscounted raw-reward total is zero. In detail, for
$0\le t<T-1$ set
$r_t=d_t-\gamma d_{t+1}$ and $r_{T-1}=d_{T-1}$. Since
\[
\sum_{t=0}^{T-1}r_t
=d_0+(1-\gamma)\sum_{t=1}^{T-1}d_t,
\]
setting
$d_{T-1}=-[d_0+(1-\gamma)\sum_{t=1}^{T-2}d_t]/(1-\gamma)$
makes this total zero. The positive anchor uses desired return $+1$ at every edge turn and the same
compensation; the negative anchor is its negation.

For one zero-total rational raw vector $r$, take common denominator $D=10$ and integers
$z_t=Dr_t$. Let $R=\max_t|z_t|$ and $C_0=(T-1)R+1$. Construct a $T\times T$ agreement-count matrix
with $C_0+z_t$ on identity edge $(t,t)$, $C_0-z_t$ on cyclic edge
$(t,t+1\bmod T)$, and zero elsewhere. Give every predicted and reference call the same tool name and
the same $L=2TC_0$ parameter keys. A key dedicated to pair $(i,j)$ has equal values only for that
pair; all remaining values are distinct. The released similarity is therefore
\[
s_{ij}=\frac{2+c_{ij}}{L+2}.
\]
Identity and cyclic shift each collect bonus $TC_0$. Any other full permutation uses at most
$T-1$ cycle edges and has bonus at most
$(T-1)(C_0+R)<TC_0$. Since every edge has a positive baseline, every partial assignment can be
strictly improved by completion. Hence the exact fiber has precisely the two designated optima.
Their raw reward vectors are $K+\delta r$ and $K-\delta r$, where
$K=(2+C_0)/(L+2)$ and $\delta=D/(L+2)$, so both have scalar total $TK$.
For each anchor, place its strictly positive agreement counts only on the diagonal. Every full
matching receives the same positive name-match baseline, while only the diagonal receives all
agreement bonuses; the full diagonal is therefore the unique optimum and has the same total $TK$.

The common raw offset produces only a turn-dependent offset shared by all trajectories and vanishes
under group centering. At edge $e=(i,j)$, the variable part of the discounted-return group is
\[
\delta(y_i,y_j,0,\ldots,0,+1,-1),\qquad y_i\in\{-1,+1\}.
\]
Its unscaled sample variance is
\[
v_e(y)=\frac{4-(y_i+y_j)^2/N}{N-1}.
\]
Thus $v_{\rm cut}=4/(N-1)$ and $v_{\rm uncut}=4/N$. Give the two anchors projected policy-score
derivatives $+1$ and $-1$, and all other trajectories derivative zero. After the compiler's equal
global/local mixture, the queried edge contribution is
\[
\alpha(v)=\frac{\delta}{\delta\sqrt v+\epsilon}.
\]
All trajectory totals are equal, so the global branch is zero. Since
$\alpha_{\rm cut}<\alpha_{\rm uncut}$,
\[
\begin{aligned}
g_G(y)
&=\alpha_{\rm cut}\operatorname{cut}_G(y)
  +\alpha_{\rm uncut}\bigl(M-\operatorname{cut}_G(y)\bigr)\\
&=C_G+J_G\sum_{(i,j)\in E}y_i y_j
\end{aligned}
\]
for $J_G=(\alpha_{\rm uncut}-\alpha_{\rm cut})/2>0$. Its minimizers are exactly maximum cuts.

The score pattern is itself realizable. At $\theta=0$, use a two-action softmax with realized and
alternative features $(+1,-1)$ in the positive-anchor contexts, swap them in the negative-anchor
contexts, and use $(0,0)$ elsewhere. The derivatives of the realized log probabilities are then
$+1,-1,0$ under one shared scalar parameter.

Finally, the reduction has polynomial bit complexity. Here $T=O(m^2)$,
$R=O(M)$, $C_0=O(M^2)$, $L=O(M^3)=O(m^6)$, and
$\delta=\Omega(m^{-6})$. Moreover,
\[
\frac2{\sqrt{m+1}}-\frac2{\sqrt{m+2}}
\ge (m+2)^{-3/2},
\]
so $\Delta=\alpha_{\rm uncut}-\alpha_{\rm cut}=\Omega(m^{-14})$ for fixed positive rational
$\epsilon$. The lower support is
$M\alpha_{\rm uncut}-\Delta\operatorname{MaxCut}(G)$. Given a MAX-CUT threshold $k$, define
the midpoint
$\tau_k=M\alpha_{\rm uncut}-(k-\tfrac12)\Delta$. If
$\operatorname{MaxCut}(G)\ge k$, lower support is at most $\tau_k-\Delta/2$; if
$\operatorname{MaxCut}(G)\le k-1$, it is at least $\tau_k+\Delta/2$.
Square roots of polynomial-bit rationals admit polynomial-time dyadic enclosure, so approximating
$\tau_k$ within $\Delta/8$ requires only polynomially many bits and preserves both promise sides.
This is a finite-bit Karp reduction from
unweighted MAX-CUT \citep{garey1976simplified}. Lower-support gap decision is NP-hard; a violating
correspondence is an NP-hard search problem; and the universal threshold statement is coNP-hard.
We claim neither completeness nor fixed-precision asymptotic hardness.
\end{proof}

\section{Additional Experimental Evidence and Robustness}

Each analysis below extends a main-text claim with an additional derivation, robustness check, or
scope test. \Cref{tab:evidenceledger} separates analysis units, evidence status, supported claims,
and explicit nonclaims. This prevents discovery, post-diagnostic, held-out, and post-hoc results from
being pooled under one evidential label.

\begin{table*}[t]
\centering
{\small
\setlength{\tabcolsep}{3pt}
\begin{tabular}{@{}p{.15\textwidth}p{.14\textwidth}p{.13\textwidth}p{.22\textwidth}p{.28\textwidth}@{}}
\toprule
Evidence & Analysis unit & Provenance & Supports & Does not support \\
\midrule
Temporal localization & $1{,}586$ nonzero code/SQL pairs & frozen thresholds and exact reanalysis &
$47.5\%$ lack one early/late label across all optima & first discovery of the source paper's routing
hazard; a claim about every trajectory metric \\
Strict trajectory repair & $1{,}532$ public task-configuration cells & pre-registered source audit &
$14.23\%$ false-perfect; nine model-pair reversals & a new sequence metric; changed end-to-end
accuracy; universal benchmark affectedness \\
MatchTIR discovery & $20$ materially affected task groups & frozen $800$-rollout sample &
$14/20$ groups admit intended-credit sign reversal & branch use in the released training run; final-policy damage \\
MatchTIR quick-start & $20$ material groups; $7$ clean distinct groups & reconstructed public path &
$6/20$ reversals; $0/7$ in the clean subset & use of the intended branch; final-policy effect \\
MatchTIR replication & $22$ materially affected task groups & task-and-seed-disjoint frozen sample &
$14/22$ groups replicate sign reversal & deployment prevalence outside the sampled frame \\
Natural update bodies & $15$ frozen tasks, one parameter tensor & post-hoc certificate audit &
$14$ common-progress certificates and one abstention & changed training decisions or policy improvement \\
Prospective transport & $24$ paired trajectories, $72$ stages & pre-registered &
agent-free residue diagnosis and a failed natural-response gate & successful prospective response preservation \\
Corrected witnesses & eight harmful controls & post-diagnostic frozen replication &
benign-perfect destructive transport can erase all eight & independent discovery evidence \\
Held-out transport & two benign and eight harmful controls & pre-outcome held-out repositories &
frozen two-sided selection preserves all failures & natural-agent or second-family generality \\
\bottomrule
\end{tabular}
}
\caption{Evidence map for the current manuscript. ``Frozen'' denotes a fixed analysis or protocol;
``pre-registered'' fixes its population, analysis, and decision criteria before outcomes;
``held out'' additionally means the listed tasks, seeds, repositories, or outcomes were not used to
construct the audited rule. Post-diagnostic and post-hoc results are labeled and are not treated as
independent confirmation. Fractions are descriptive for the stated denominator.}
\label{tab:evidenceledger}
\end{table*}
\paragraph{B.1\quad Sharp temporal localization over all optimal alignments.}
We downloaded the public normalized trajectories associated with
\citet{consistencytestable2026} and retained every released paired SWE-bench or Spider2-DBT record. We
exactly replicate the released action extractor: every nonempty tool name emitted by an agent step
becomes one token. No tool is renamed,
collapsed, or semantically matched in the primary analysis.

For sequences $a_{1:n},b_{1:m}$, let $D_{ij}$ be the suffix unit-cost Levenshtein optimum. An edge
$e:(i,j)\to(i',j')$ is admissible exactly when its local cost plus $D_{i'j'}$ equals $D_{ij}$; these
edges form the optimal-alignment DAG. For each edit edge we record source position $i/n$ and
symmetric DP-state position $\tfrac12(i/n+j/m)$. Every optimal path contains exactly $D_{00}$ edits,
so forward addition of edge positions with min/max semirings gives the sharp minimum and maximum
mean position in $O(nm)$ time and memory. Lowest/highest optimal paths are known sequence-alignment
objects \citep{lember2014optimal}; our use is to bound an agent-evaluation mechanism claim, not to
claim a new alignment algorithm.

The analysis and cutoffs were frozen before the natural-data run. Exact-null and unique-token
early/late anchors pass. We also exhaustively enumerate every alignment for all $961$ binary sequence
pairs of lengths zero through four; distance, optimal-path count, and all four extrema match the DAG
program. This exhaustive finite-sequence validation was completed before analysis. Across $1{,}586$ non-null
pairs, $47.5\%$ are alignment-unidentified under both normalizations. Left/right-priority optimal
tracebacks differ in label for $55.9\%$. Dominant-token share predicts symmetric range width
($\rho=.606$, pair-bootstrap $95\%$ CI $[.561,.647]$), and the natural heterogeneous-action control
is narrower (OpenHands $.094$ versus Codex $.466$). Of the thirteen audited cells the source study
attaches a directional early/late mechanism claim to three (two Codex, one OpenHands); both Codex
claims reverse over equally optimal alignments and the heterogeneous-action OpenHands claim does not,
consistent with the repeated-action mechanism. Collapsing all tokens, a response-destructive
diagnostic excluded from evidence, raises ambiguity from $48.8\%$ (on the $1{,}545$ collapse-comparable
pairs) to $93.0\%$.

We disclose one coordinate-convention correction. The frozen first analysis placed an edit at its DP
transition endpoint. Comparison with the source paper's reported $.217$ position then exposed
that the paper defines local weight at state $(i,j)$. We preserved v1 and reran after changing only
that coordinate convention: endpoint v1 gives $44.6\%$ ambiguity and the same two reversing cells;
source-state v2 gives $47.5\%$. The preregistered threshold sweep gives
$43.2\%$, $47.5\%$, and $49.4\%$ ambiguity at early/late cutoffs $(.35,.45)$, $(.40,.50)$, and
$(.45,.55)$, with the same two aggregate reversals. Because v2 follows a disclosed post-v1
correction, we report both rather than describing v2 alone as pristine preregistered confirmation.

\begin{table*}[t]
\centering
{\scriptsize
\setlength{\tabcolsep}{4pt}
\begin{tabular}{@{}p{.23\textwidth}p{.10\textwidth}p{.19\textwidth}p{.14\textwidth}p{.25\textwidth}@{}}
\toprule
Cell & Nonzero pairs & Sharp source-state mean range & Pairwise unidentified & Published directional claim \& audit consequence \\
\midrule
Spider2-DBT / Codex / header shuffle & 64 & $[.211,.816]$ & $79.7\%$ & early; reversible over exact optima \\
Spider2-DBT / Codex / header translation & 64 & $[.229,.774]$ & $78.1\%$ & none \\
Spider2-DBT / Codex / timestamp & 64 & $[.208,.827]$ & $82.8\%$ & none \\
Spider2-DBT / OpenHands / header shuffle & 64 & $[.510,.682]$ & $14.1\%$ & late; retained over exact optima \\
Spider2-DBT / OpenHands / header translation & 64 & $[.518,.685]$ & $12.5\%$ & none \\
Spider2-DBT / OpenHands / timestamp & 63 & $[.525,.702]$ & $12.7\%$ & none \\
SWE-bench / Codex / injection & 389 & $[.266,.687]$ & $56.6\%$ & none \\
SWE-bench / Codex / linear MCP & 499 & $[.298,.747]$ & $70.1\%$ & early; reversible over exact optima \\
SWE-bench / OpenHands / injection & 57 & $[.472,.517]$ & $0.0\%$ & none \\
SWE-bench / OpenHands / linear MCP & 8 & $[.415,.463]$ & $0.0\%$ & none \\
SWE-bench / OpenHands / noise & 161 & $[.457,.505]$ & $1.2\%$ & none \\
SWE-bench / OpenHands / paraphrase & 31 & $[.483,.531]$ & $0.0\%$ & none \\
SWE-bench / OpenHands / translation & 58 & $[.471,.531]$ & $5.2\%$ & none \\
\bottomrule
\end{tabular}
}
\caption{Complete cell-level ledger for the temporal-localization audit under the corrected
source-state convention. A range is the interval between the aggregate mean of the per-pair sharp
lower endpoints and the aggregate mean of the per-pair sharp upper endpoints. ``Pairwise
unidentified'' is the fraction whose sharp range crosses the frozen early/late decision under both
reported normalizations. Codex denotes Codex-GPT5mini and OpenHands denotes OpenHands-KimiK2.}
\label{tab:temporalcellledger}
\end{table*}
\paragraph{B.2\quad Second natural audit: global matching (M3-Bench).} A second public pipeline
exhibits the same non-identification under a \emph{different} alignment family (global Hungarian
matching, not an edit-distance DAG) and \emph{domain} (multi-modal medical/tool-use, not code/SQL).
M3-Bench \citep{m3bench2025} buckets tool calls by name and maximizes summed argument-embedding
cosine by Hungarian assignment, then reads Order Consistency, Step Coherence, and Merge Purity off the
chosen assignment. We audit the pinned paper-release ground truth and dataset revision without
generating model output. Whenever the same tool name and canonicalized arguments recur across steps, every permutation
of those copies keeps each edge at cosine one, so the assignment is a \emph{certified} global tie, not
an embedding near-tie. Enumerating these exact-tie faces and copying the upstream metric definitions,
all $260$ task/assignment combinations match the pinned official evaluator (max absolute difference:
Order Consistency $0$, Step Coherence $0$, Merge Purity $3.79\!\times\!10^{-8}$, the expected
\texttt{float32} effect). Of $213$ released rows ($208$ unique task IDs under a disclosed last-row
rule), $14$ carry a cross-step exact tie and a non-point structural score. Task \texttt{00130004}
(\texttt{medical}) has $32$ certified optima; compared to \emph{itself}, Order Consistency ranges the
full $[0,1]$, Step Coherence $[.43,1]$, Merge Purity $[.31,1]$---a hidden tie-break can diagnose the
identical trajectory as perfectly ordered or maximally inconsistent. Repeated identical calls are
ordinary agent behavior (iterated slide or figure generation, repeated retrieval or query), not a
pathology. The metric is non-identified when ordering depends most directly on these repeated calls.
Self-comparison is an axiomatic
positive control (no model, judge, or sampling noise can explain the range); a \emph{non-identical}
cross-revision pair for task \texttt{00170002} (paper release nine calls, current release eight) has
four certified optima with Step Coherence $[.44,.89]$, Order Consistency $[.67,.82]$, Merge Purity
$[.61,.89]$. The ambiguity persists across both public dataset revisions ($14$ tie-tasks in the paper
release, $10$ in the current release), so it is a standing structural property, not a fixed defect:
the specific tie-tasks turn over (\texttt{00130004} is paper-release-only; all $10$ current ties are
\texttt{generatepowerpoint}) but the repeated-call mechanism survives the churn. Because the
repository releases no per-model predictions, this is a metric-\emph{identification} failure, not a
demonstrated model-ranking reversal. The multiplicity lives in the fixed reference, so any prediction
inherits these ties even without repeating a call: routing a single predicted call into a reference
bucket that already holds identical copies suffices, and self-matching is the confound-free witness
of a defect every prediction against these tie-tasks incurs. \emph{Algorithmic scope.} Our two auditors share an interface but are not a
universal solver: additive statistics on the Levenshtein optimal-path DAG and linear functionals over
an explicit Hungarian optimal face admit exact sharp bounds; a general nonlinear statistic (e.g.
inversion-based Order Consistency) may require enumeration or certified relaxations.

\paragraph{B.3\quad Natural learning audit: exact-optimum MatchTIR credit.}
MatchTIR \citep{qu2026matchtir} derives dense tool-call rewards by bipartite matching and combines
turn-level with trajectory-level advantage. We audit the released Qwen3-4B MatchTIR checkpoint,
public training data, and released source version. The checkpoint supplies natural post-training rollouts on the distribution
where the process reward is defined; this is not a held-out model comparison.

\emph{Study design.} Before generation, we froze source/data versions, $200$ task rows
(half with repeated ground-truth tool names), four rollout seeds, temperature/top-$p$ $=1$, six
turns, and $768$ new tokens per turn. All $800/800$ trajectories completed; $20$ contain a tool error
and parser attrition is zero. Outcome-blind memory-cap resumes changed no scientific setting.

\emph{Exact correspondence audit.} For each trajectory with two to eight parsed calls, we reproduce
the released sorted-edge greedy assignment, compute the exact maximum-weight partial one-to-one
objective, and enumerate every unique per-call reward vector attained by an exact optimum. A bitmask
dynamic program is exponential only in the frozen at-most-eight reference calls; all retained groups
are exactly enumerated, with no vector truncation or Cartesian-product exclusion. A trajectory is
pre-registered as materially ambiguous when it has at least two distinct predicted-call JSONs and an
exact-optimum per-call reward width at least $.10$. Row-clustered uncertainty uses $10{,}000$
preregistered resamples of the $200$ task rows.

\emph{Downstream credit.} In each four-rollout task group, we reconstruct the intended $.9$-discounted
turn returns and dual-level standardized advantage, enumerate the Cartesian product of exact-optimal
reward vectors, and record every coordinate sign. The frozen gate required a prevalence-CI lower
bound of $5\%$, at least $20$ material groups and $20\%$ sign-flip groups, at least $60\%$ retained
canonical nonzero coordinates, and coverage of at least two source strata.

All four criteria pass. Of $495$ eligible rollouts, $93$ are material
($18.8\%$, row-clustered $95\%$ CI $[14.1,24.0]\%$), spanning
\texttt{multi\_hop}, \texttt{parallel\_multi\_hop}, and
\texttt{parallel\_single\_hop}. The released greedy objective is never suboptimal. Of $20$ material
four-rollout groups, $14$ admit an intended advantage-sign reversal. Across the $81$ exactly
enumerated groups, the strict mask retains $759/864=87.8\%$ canonical nonzero coordinates. Restricted
to material groups it retains $332/437=76.0\%$, versus $144/437=33.0\%$ under whole-group abstention;
within the $14$ sign-flip groups it retains $188/293=64.2\%$, versus zero under whole-group
abstention.

\emph{Tie-breaking sensitivity.} Under a post-hoc uniform law over distinct exact-optimal reward
vectors, the discovery and replication samples give per-coordinate sign-change probabilities
$.201/.203$, mean task-level risks $.547/.494$, and nonzero risk in $14/20$ and $14/22$ groups. Thus
the effect does not require an adversarial optimum, but these law-conditional quantities do not
replace the distribution-free all-optima certificate.

\emph{Near-optimal sensitivity.} We preregistered the relative-slack set
$W^\star-W(M)\leq\rho\max\{|W^\star|,1\}$. Forced residual assignments give sharp coordinatewise
ranges. Zero slack exactly reproduces the headline counts; no new material trajectory appears through
$\rho=.01$ in either sample. At $.05$, counts rise from $93$ to $107$ and from $90$ to $116$; the
median slack required to create a $.10$ range is $.20$ in both samples. The claim is therefore stable
to small solver tolerances. Soft matching remains a different estimand, while uniform tie-breaking
requires a declared law.

\emph{Post-hoc hostile checks.} Removing tool errors leaves $84/476=17.6\%$ material rollouts;
requiring distinct predicted-call JSONs leaves $29/408=7.1\%$; imposing both leaves $24/394=6.1\%$.
When only the nine clean all-distinct material rollouts vary, $5/7$ affected groups still reverse sign
and the mask retains $150/159=94.3\%$ canonical updates. Duplicates amplify but do not create the
result.

\emph{Public-configuration boundary.} The released quick-start invokes the multistep process scorer
but does not visibly enable the default-off multi-turn mask. Reconstructing
that effective sequence-mask path gives $6/20$ sign-flip groups over all material drivers,
$1/9$ for all-distinct drivers, and $0/7$ after additionally removing tool errors. We therefore make
no claim about an undocumented author override, final-policy degradation, or the reported benchmark
scores. The primary result is about the turn-level mechanism described by the paper; the
quick-start reconstruction is a disclosed boundary, not pooled with it.

\paragraph{B.4\quad Task-and-seed-held-out credit replication.}
\label{app:heldoutcredit}
The natural audit above establishes that the intended advantage can depend on a rollout's own tied
matching. We next asked a stricter question: can ambiguity in one rollout change the sign assigned to
a different rollout whose own local reward is fixed? This is a nonlocal consequence of the group
baseline, not a second matching ambiguity at the target.

\emph{Exact centering law.}
Consider one nondegenerate normalization cell containing $n$ rollouts. Let rollout $i$'s scalar local
reward range over $R_i=[\ell_i,u_i]$, with choices independent across rollouts because each trace is
matched separately, and let
$C_i=x_i-n^{-1}\sum_{k=1}^n x_k$ be its centered credit.
Then its sharp identified interval is
\begin{equation}
  \label{eq:peercontagion}
  \begin{aligned}
  C_i&\in[L_i,U_i],\\
  L_i&=\frac{(n-1)\ell_i-\sum_{j\ne i}u_j}{n},\\
  U_i&=\frac{(n-1)u_i-\sum_{j\ne i}\ell_j}{n}.
  \end{aligned}
\end{equation}
Writing $w_i=u_i-\ell_i$, the width is exactly
\begin{equation}
  \label{eq:peerwidth}
  U_i-L_i=\frac{(n-1)w_i+\sum_{j\ne i}w_j}{n}.
\end{equation}
Both bounds are attained by choosing the displayed endpoint in every independently matched rollout.
Consequently, even if the target is locally identified ($w_i=0$), peer $j$ contributes exactly
$w_j/n$ to its centered-credit uncertainty. Standardization divides $C_i$ by a positive empirical
scale, so its sign is unchanged in a nondegenerate cell. Hence $L_i>0$ and $U_i<0$ certify positive
and negative signs respectively; otherwise the rule abstains. The algebra is elementary. Its role is
to expose why locally identified matching credit is not compositional under a coupled baseline and to
provide a certificate that avoids Cartesian enumeration.

\emph{Frozen replication.}
Before inspecting a held-out outcome, we selected $200$ task rows from the $1{,}440$ eligible rows
unused above, exactly matching the original repeated-name and source-stratum counts. We replaced all
four rollout seeds with fresh values disjoint from discovery and retained the same checkpoint version, temperature,
top-$p$, six-turn limit, and token limit. Thus neither task nor seed overlaps the first audit; the
rows are not held out from checkpoint training. Six decision criteria were frozen before
generation. Completion ($800/800$), material/sign-flip groups ($22/14$), direct contagion
($7/14=.500$, Wilson $95\%$ CI $[.268,.732]$), and zero-false-sign recall
($283/402=.704$) pass. Margin coverage ($5/11=.455$) and the frozen risk score (AUC $.589$ versus
the $.70$ threshold) fail, so the preregistered verdict is \textsc{Weak}, not \textsc{Promote}. The
direct-contagion rate replicates the first audit's $6/14=.429$; the risk ranking does not and is not
refit.

\emph{Conditional mechanistic audit.}
Because C2 passed, we executed the separately frozen gradient audit on all 11 direct witnesses. To
avoid treating a tied input/output embedding as an output-only parameter, the audit differentiates
the checkpoint's $2{,}560$-dimensional final-RMSNorm scale under the released
sequence-mean/token-mean aggregation. All $166/166$ scored turns round-trip exactly through the
tokenizer. The median angle between correspondence-compatible group updates is $11.343^\circ$ and
the median relative displacement is $.204$; no witness reaches $45^\circ$ or has a negative inner
product. This audit is also \textsc{Weak}: correspondence measurably changes this parameter update,
but the evidence does not license a large-gradient, full-parameter-training, or policy-quality claim.

The interval law additionally passes $8{,}750$ randomized coordinate checks.

\paragraph{B.5\quad Independent compiler witness and randomization diagnostic.}
\emph{Randomization versus marginalization.}
We froze the two MatchTIR rollout samples, exact-optimum enumerator, uniform law over distinct optimal
reward vectors, materiality rule, and promotion thresholds before comparing
$\mathbb E[N(R_A)]$ with $N(\mathbb E[R_A])$. In the discovery sample, $7/20=35.0\%$ of material
groups contain an opposite-sign coordinate and the median maximum absolute gap is $.329$; in the
task-and-seed-disjoint sample the corresponding values are $8/22=36.4\%$ and $.241$. The pooled
cosines are nevertheless $.988$ and $.990$, and no held-out material group falls below cosine $.90$.
The pre-registered geometry gate therefore fails in both samples. The result establishes a replicated
local noncommutation diagnostic, not a large aggregate-gradient or final-policy effect.

\emph{LeTS source-level replication.}
LeTS \citep{zhang2025lets} constructs Jaccard process rewards for retrieval steps by one SciPy
Hungarian assignment, standardizes the selected rewards within a rollout, and uses them to rescale
group-relative learning credit. We froze the public source revision and exhaustively searched finite universes before
inspecting any witness. For three pairwise-distinct current and three pairwise-distinct gold
retrieval-result sets, the source-faithful Jaccard matrix
\[
 \begin{bmatrix}
 1 & 1/2 & 1/3\\
 0 & 0 & 1/3\\
 1/2 & 1/3 & 2/3
 \end{bmatrix}
\]
has two exact assignments of total score $5/3$. Their per-step rewards are
$(1,0,2/3)$ and $(1,1/3,1/3)$. On the released negative-outcome branch with effective
$\gamma=1$, the multiplier of the first fixed retrieval step is respectively $+.127$ and $-.155$.
The released SciPy call returns only the latter optimum. Independent recomputation matches the
extracted source function to floating-point precision.

This is an admitted source failure, not a prevalence estimate. The public training rollouts are not
released. Moreover, the paper describes coefficient $.1$, while the frozen executable path leaves
$\gamma=1$ at this operation and does not consume the two shell-level $.1$ fields inspected in the
audit. With coefficient $.1$, this witness changes multiplier magnitude but not sign. We therefore
do not infer which coefficient produced the paper's model, any effect on its reported results, or a
final-policy consequence.

\paragraph{B.6\quad Hidden exact-tie completion as a temporal credit policy.}
\label{app:positionaltie}
The all-optima audit above asks whether intended credit is identified. We next froze a separate
analysis asking which point the released implementation systematically selects from that fiber. This
analysis reuses the two already frozen rollout samples; it performs no new model generation and does
not alter their original preregistrations. Before computing an aggregate, we fixed the primary
population, intervention, task-cluster bootstrap, and four promotion criteria. Primary trajectories
have two to eight predicted calls, complete exact-vector enumeration, at least two distinct optimal
reward vectors, distinct predictions, and exact-optimum reward width at least $.10$.

\emph{Source mechanism and intervention.}
The released routine constructs score edges in row-major predicted-call order, applies Python's
stable descending score sort, and greedily accepts an edge whenever its prediction and target remain
free. The caller appends predicted calls chronologically. Hence score ties inherit temporal order as
an unstated secondary key. Let $\mathcal V^\star$ be the set of distinct exact-optimal per-call reward
vectors, let $\bar r=|\mathcal V^\star|^{-1}\sum_{v\in\mathcal V^\star}v$, and let
$r^{\rm src}$ be the released reward. The frozen intervention reverses only the row array given to
the selector and maps the resulting reward $r^{\rm rev}$ back to the original calls. It enters the
analysis only when both selectors attain the independently computed exact objective.

For normalized call position $z_i=i/(m-1)$, the trajectory bias is the slope of
$r_i^{\rm src}-\bar r_i$ on $z_i$; the reported contrast is the mean first-half minus last-half
value. The promotion rule required, independently in both samples: at least $95\%$ exact-objective
preservation; early-minus-late contrast at least $.05$ with task-clustered $95\%$ interval above
zero; at least $65\%$ negative within-trajectory slopes; and intended advantage-sign disagreement in
at least $20\%$ of affected four-rollout task groups.

\begin{center}
\small
\begingroup
\setlength{\tabcolsep}{2.5pt}
\begin{tabular}{@{}lrrrr@{}}
\toprule
Sample & $n$ & Exact & Bias [95\% CI] & Sign flips \\
\midrule
Discovery & 93 & $93/93$ & $.196\,[.149,.246]$ & $13/20$ \\
Held out & 90 & $90/90$ & $.174\,[.133,.219]$ & $14/22$ \\
\bottomrule
\end{tabular}
\endgroup
\end{center}

Negative slopes occur in $92/93$ discovery and $90/90$ held-out trajectories. Thus all frozen
criteria pass in both samples. This is not a solver-regret result: both the original and reversed
completions attain the exact matching score, and their total rewards differ by at most
$8.9\times10^{-16}$.

\emph{Independent hostile verification.}
An independent implementation and direct execution of four upstream functions agree on every
retained scorer, selector, and objective value; all interventions preserve call multisets, turns,
exact objectives, and totals. Re-aggregating over occupied turns gives contrasts
$.196\,[.148,.247]$ and $.173\,[.134,.217]$, and all three source strata retain the direction. A
post-hoc $128$-permutation placebo reduces mean bias to about $-.001$ in both samples. The same
$13/20$ and $14/22$ groups retain positive-to-negative reversals when both advantages must be at
least $.05$ from zero, ruling out a mere zero-crossing artifact.

\emph{Block characterization.}
If $p$ ordered rows tie at score $c$ for $q<p$ free columns, stable greedy completion credits the
first $q$ rows, reversed enumeration credits the last $q$, and the fiber mean assigns $qc/p$ to each;
all totals equal $qc$. The fact is elementary, but the frozen audits show that this hidden temporal
policy reverses strict normalized learning signs in a released compiler.

\emph{Interpretation and boundary.}
Chronology can be a legitimate secondary objective, but then it is part of the reward contract and
must be declared. The primary bipartite objective does not identify it. For a solver-invariant claim,
the all-optima sign mask avoids inheriting that hidden policy; for a point-valued operational
contract, a declared permutation-equivariant fractional or mixed completion is another option.
Neither result proves that an undocumented training run enabled the branch or that changing the
completion improves a final model.

\paragraph{B.7\quad Natural legal-update bodies and compiler interactions.}
\label{app:naturalupdatebody}
This post-hoc analysis does not revise the frozen consequence gate. It uses both rollout samples, the
exact reward enumerator, released compiler, and the complete $2{,}560$-coordinate final-RMSNorm scale
vector frozen in the original protocol. Other layers may have different geometry. For each task, we
compile every exact-optimal reward product and form its gradient convex hull.

The primary cache contains $15$ tasks and $571$ exact completion combinations. Median maximum
pairwise angle is $25.13^\circ$ (task-bootstrap $95\%$ CI $[6.87,51.87]$); $10/15$ tasks reach at
least $15^\circ$; median body diameter divided by median gradient norm is $.451$ (CI
$[.150,.802]$); and median affine rank is $7$. One task contains a negative-inner-product pair with
angle $161.90^\circ$. The canonical direction's scalar support crosses zero in one task. Applying the
fully corrective support-oracle procedure to the whole body gives $14$ strict common-progress
certificates and one abstention, uses at most nine oracle calls and six active vertices, and matches
the direct full-hull QP within $8.33\times10^{-12}$. The canonical direction is already strict common
progress on those same $14$ tasks. The robust direction changes no binary decision in this slice but
improves the worst-case unit margin by median $6.4\%$ and at most $28.3\%$. This comparison is
post hoc and establishes certificate quality, not a policy-performance gain.

A historically disjoint seven-task cache gives median maximum angle $17.20^\circ$, $4/7$ tasks above
$15^\circ$, median diameter-to-norm ratio $.316$, and common progress in all seven; this is a
temporal replication, not a preregistered test.

Exact functional ANOVA finds nonzero higher-order energy in all eight multi-fiber tasks under the
released standardizer (median $3.96\%$, range $[.54,10.33]\%$); freezing group statistics makes the
compiler affine and drives the maximum below $7.1\times10^{-30}$. Their interaction graphs have
median/max treewidth $2/3$. Factor reconstruction and min-sum elimination match full enumeration on
all $45/45$ frozen projected queries within $1.2\times10^{-19}$.

The frozen pre-registered consequence gate remains \textsc{Weak}: median angle
$7.551^\circ$ passed its directional threshold, normalized displacement $.1317$ missed $.15$,
$26.7\%$ exceeded $15^\circ$, and ambiguous mass was $11.6\%$. The post-hoc hull analysis changes
the estimand from one canonical contrast to all legal updates; it does not turn this weak result into
a policy-performance claim.

\paragraph{B.8\quad Source and complexity conformance audits.}
\label{app:compilerconformance}
The theorem in A.7 is analytic. Executable tests are included only to falsify algebra,
serialization, and implementation mistakes.
An independent implementation checks all $813$ no-isolate graphs through five vertices,
$25{,}264$ signs, and $160$ gadgets with zero failures. Direct source calls recover exactly two
vectors for $990$ binary gadgets and one for $360$ anchors. Literal call dictionaries preserve cut
ordering in all $744$ compiled assignments; the upstream PyTorch compiler matches the NumPy port on
$420$ random cases within $1.61\times10^{-6}$. Dyadic promise encodings through $m=2000$ use at most
$109$ denominator bits, and the full source/autograd chain preserves MAX-CUT minimizers through
$m=8$. These checks test conformance, not asymptotic hardness.

\begin{table*}[t]
\centering
{\small
\setlength{\tabcolsep}{4pt}
\begin{tabular}{@{}p{.31\textwidth}p{.25\textwidth}p{.37\textwidth}@{}}
\toprule
Conformance check & Audited population & Exact result \\
\midrule
Independent finite construction & 813 no-isolate graphs through five vertices; 25,264 signs; 160 gadgets & zero failures \\
Binary source fibers & 990 binary gadgets & exactly two returned vectors per fiber \\
Anchor source fibers & 360 anchors & exactly one returned vector per fiber \\
Literal call-dictionary compilation & 744 compiled assignments & cut ordering preserved in every assignment \\
Upstream compiler versus independent port & 420 random cases & maximum numerical discrepancy $1.61\times10^{-6}$ \\
Dyadic promise serialization & graph sizes through $m=2000$ & at most 109 denominator bits \\
Full source/autograd chain & graph sizes through $m=8$ & MAX-CUT minimizers preserved \\
\bottomrule
\end{tabular}
}
\caption{Complete conformance ledger for the compiler construction. These checks falsify algebraic,
serialization, and implementation errors in the finite construction; they are not evidence for
average-case hardness or for fixed-precision asymptotic hardness.}
\label{tab:compilerconformanceledger}
\end{table*}
The construction is intentionally synthetic and large: a loose serialized bound is
$O(m^9\log m)$ bits. The float checks do not establish fixed-precision asymptotic hardness.

\paragraph{B.9\quad Prospective transport audit.}
\label{app:pta}
This preregistered study is distinct from the banked audits. One \texttt{gpt-5.5}/\texttt{high}
configuration ran all $24$ paired trajectories and $72/72$ stages: two substrates, two views, six
fresh seeds, three horizons, $\tau=.05$, $\hat b=0$, and Bonferroni $\alpha=.05/2$. A failed partial
attempt contributes no data. The provider exposes neither immutable checkpoint digest nor
controllable sampling seed, so bitwise regeneration is not claimed.

\paragraph{B.10\quad Two-sided transport-admissibility gates.}
The evidence has three nonexchangeable levels: a failed original natural-response gate, a
post-diagnostic corrected-witness replication, and a later pre-outcome held-out validation. The
original $2$-repository $\times$ $4$-mutation gate included four nonexistent pylint test nodes, whose
``not found'' output was misparsed. We withdraw its cross-repository $8/8$ claim; only the four xarray
cells were executable. The invalid run is disclosed here and excluded from evidence.

The replacement call-site transport substitutes the local implementation body at its actual call
site, preserving wrapper-side preprocessing, postprocessing, exceptions, and return flow. In the
corrected V3 witness, benign bases pass before/after and map to zero; all eight harmful controls retain
identical failure signatures and positive distance. All four leave-one-family-out folds select this
transport with zero false invariance. Across the 36 natural states used for selection, it reduces mean
distance from $.1163$ to $.0244$ ($79.1\%$). Because the witness was chosen after diagnosis, V3 is
hypothesis-confirming, not independent discovery.

Before observing requests/pytest outcomes, we froze the implementation, candidate order, targets,
tests, four mutation families, and seven conditions. The external matrix passes all seven:
both benign views map to zero, all eight executable-harmful mutations preserve their failure
signatures and positive distance, while a destructive transport erases all eight. A separately
frozen family extension covers local alpha-renaming and branch inversion: all four benign pairs map
to zero, all $16/16$ harmful controls remain positive, and benign-perfect destructive maps erase
$10/16$. These results validate deterministic transfer across repositories and three transformation
families, not new natural trajectories or a second agent family.

The separately frozen natural-response gate applies the exact paper transport to $36$ evolved states.
All baselines pass; response deletion occurs in all $18$ xarray states and $0/18$ pylint states, so
the preregistered $75\%$ and both-substrate conditions fail. The paper claims natural deletion only
for xarray. A conservative exact-delegation guard accepts $2/2$ benign bases, rejects $8/8$ harmful
wrappers, and safely abstains on all $36$ evolved wrappers; the call-site transport covers all $36$
while preserving their witnesses.

\paragraph{B.11\quad Natural strictness failure and exact repair.}
\label{app:earthstrict}
\emph{Public contract and source defect.}
Earth-Agent places \emph{Tool-Exact-Match} in its step-by-step protocol and defines it as the longest
common prefix divided by the ground-truth length, while attributing lower Tool-Exact-Match values to
models that ``introduce irrelevant steps'' \citep{feng2026earthagent}. Under that definition, steps
appended after a complete correct prefix cost nothing. At the frozen source revision, the stepwise
scorer stops at the first positional mismatch and divides the matching prefix by the number of
expected calls. Its reported actual sequence is also truncated to the expected length. The
parameter-accuracy helper has the same one-sided denominator for full call equality.

\begin{restated}[Extension invariance and exact repair]
Let $r$ be a nonempty reference sequence, $p$ a predicted sequence, and $\ell(r,p)$ their exact
prefix length. The reference-normalized score $s_{\rm ref}(r,p)=\ell(r,p)/|r|$ assigns one to every
extension $r\mathbin{\|}z$. The two-sided score
$s_2(r,p)=\ell(r,p)/\max\{|r|,|p|\}$ assigns one if and only if $r=p$.
\end{restated}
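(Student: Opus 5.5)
The plan is to argue directly from the definition of the exact prefix length $\ell(r,p)$, the number of leading positions on which $r$ and $p$ agree. Write $m=|r|\ge1$ and $n=|p|$. The only structural fact needed is the elementary bound $0\le\ell(r,p)\le\min\{m,n\}$. This holds because a common prefix cannot be longer than either sequence.

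The first step handles the extension clause. For $p=r\mathbin{\|}z$, the first $m$ entries of $p$ coincide with $r$, so $\ell(r,p)\ge m$. Combined with $\ell\le m$, this gives $\ell=m$ and hence $s_{\rm ref}=m/m=1$, regardless of the suffix $z$, including $z$ empty. The same computation already exhibits the defect: the extra steps in $z$ never enter the denominator.

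The second step handles the two-sided score. First, $s_2$ is well defined, since $\max\{m,n\}\ge m\ge1$. For the identity $s_2=\min\{\ell/m,\ell/n\}$ (with the convention $\ell/0$ irrelevant, since $n=0$ forces $\ell=0$), note that dividing a fixed nonnegative $\ell$ by the larger denominator gives the smaller ratio.

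\emph{If direction.} If $r=p$, then $n=m$ and $\ell=m$, so $s_2=1$.

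\emph{Only-if direction.} Suppose $s_2=1$, so $\ell=\max\{m,n\}$. The bound $\ell\le\min\{m,n\}$ then forces $\min\{m,n\}\ge\max\{m,n\}$, hence $m=n=\ell$. So $p$ agrees with $r$ on all $m$ positions and has no further entries, which gives $p=r$.

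The analogous statement for full-call equality follows by the same argument with positional agreement of entire calls in place of names. There is no real obstacle here. The only point needing care is the degenerate case $n=0$, which is covered by nonemptiness of $r$: the denominator $\max\{m,n\}=m>0$, and $\ell=0$ gives $s_2=0\neq1$, consistent with $r\ne p$.
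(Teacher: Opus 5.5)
Your proof is correct and follows the paper's argument essentially verbatim: you get the extension clause from $\ell(r,r\mathbin{\|}z)=|r|$, the only-if direction by squeezing $\max\{|r|,|p|\}=\ell\le\min\{|r|,|p|\}$ to force both lengths to equal $\ell$, and the converse by substitution. Your extra handling of $|p|=0$ and the $\min$-identity aside is harmless but not needed for the stated claim.
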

\begin{proof}
For any suffix $z$, $\ell(r,r\mathbin{\|}z)=|r|$, proving the first statement. For the second,
$s_2=1$ requires $\ell=\max\{|r|,|p|\}$, while by definition
$\ell\leq\min\{|r|,|p|\}$; hence both lengths equal $\ell$ and every position agrees. Equality of
the complete sequences is sufficient by substitution. For an empty reference we define score one
only when the prediction is also empty.
\end{proof}
This repair is the minimum of reference coverage and prediction precision. We use it as a
conservative certificate for the source's advertised exactness contract, not as a new semantic
similarity metric. Replacing name equality with name-and-input equality gives the parameter version.

\emph{Frozen protocol.}
Before reading prediction contents, the audit fixed the repository version, the evaluator-defined RGB
slice (the final 59 of 247 ground-truth tasks), 26 named AP/IF configurations, at least $80\%$
coverage eligibility, the name and parameter repairs, task-clustered bootstrap, and six decision
gates. The preregistration and analysis were frozen before prediction contents were read.
All 26 configurations are eligible; two cells are missing, leaving $1{,}532$.

\begin{table*}[t]
\centering
{\footnotesize
\setlength{\tabcolsep}{2.5pt}
\begin{tabular}{@{}p{.14\textwidth}p{.14\textwidth}p{.09\textwidth}p{.25\textwidth}p{.12\textwidth}p{.20\textwidth}@{}}
\toprule
Audit & Status & Cells & False-perfect (clustered $95\%$ CI) & $\Delta\geq.05$ & Ranking effect \\
\midrule
Earth-Agent RGB & pre-registered GO & $1{,}532$ & $14.23\%$ $[7.58,21.80]\%$ & $26.17\%$ &
9 pair reversals; top changes \\
Earth-Agent common tasks & post-hoc robustness & $5{,}278$ & $5.67\%$ $[3.45,8.15]\%$ & $28.36\%$ &
18 pair reversals; top changes \\
MCPEval held out & pre-registered REDUCE & $248$ & $6.05\%$ $[2.82,9.68]\%$ & $16.53\%$ &
1 pair reversal \\
\bottomrule
\end{tabular}
}
\caption{Natural audits of one-sided strict trajectory scores. The Earth-Agent RGB row is
pre-registered and primary. The complete-case row was designed after that verdict and is labeled
post hoc. MCPEval is a weaker independent replication whose binary strict-success field already
checks extra calls.}
\label{tab:naturalstrictness}
\end{table*}

\emph{Primary result and parity.}
The pre-registered audit finds 218 name-level and 212 parameter-level false-perfect cells. Name
defects occur in $22/26$ configurations and $12/13$ model families. The repaired name score changes
nine pairwise comparisons and the top configuration from DeepSeek-V3.1 IF to GPT-4o AP
(Kendall $\tau_b=.9446$). All exactness checks pass with zero repair errors. An independent
source-parity check reproduces every saved official task score with maximum absolute difference zero. In
post-hoc stress tests, the top change survives all $59$ leave-one-task-out omissions; leave-one-family
false-perfect prevalence stays in $[13.7,15.4]\%$.

\emph{Full-corpus and independent checks.}
The repository also contains predictions beyond the source evaluator's RGB slice. An explicitly
post-hoc available-case analysis covers 6,374 cells. To remove differential missingness, an
all-configuration complete-case analysis uses the 203 tasks observed for every configuration and
finds the second row of \Cref{tab:naturalstrictness}: defects remain in $23/26$ configurations and
all 13 families, while the top configuration changes from Kimi-K2 IF to GPT-4o AP.

MCPEval \citep{liu2025mcpeval} states that strict matching requires a generated trajectory exactly
equal to ground truth and mainly reports a weighted overall score. At the frozen source revision,
its binary strict-success field correctly
checks extra calls, but name, parameter, and LCS order components use one-sided normalization inside
the reported overall score. A discovery model was excluded before inspecting four held-out models.
The held-out result in \Cref{tab:naturalstrictness} misses its frozen prevalence and materiality
gates, so it is supporting evidence only.

Independent reruns reproduce the frozen results exactly. These audits use no model generation, LLM
judge, or human label. They correct the declared
strict trajectory metric; they do not establish that extra calls are semantically harmful, alter
end-to-end answer accuracy, or invalidate either source framework as a whole.

\section{Extended Limitations and Scope}

\paragraph{Contract-relative validity.}
Two-sided certification is relative to the declared null controls, positive controls, behavioral
witnesses, and transport family. Passing those checks does not prove global semantic equivalence.
The linear subspace theorem gives a sharp feasibility boundary for linear transports; it motivates,
but does not by itself validate, the nonlinear source transformations used in the executable audits.
Those transformations are validated by frozen tests and witnesses instead.

\paragraph{Natural-audit population.}
The temporal-alignment result concerns the released code and SQL trajectory pairs after excluding
zero-edit pairs. The MatchTIR fractions condition on task groups with material exact-optimum reward
width and are not estimates of ambiguity prevalence in an unrestricted deployment distribution.
The discovery and task-and-seed-disjoint samples are reported separately, and the natural update-body
analysis is post hoc rather than a pre-registered training comparison.

\paragraph{Learning-consequence boundary.}
The MatchTIR audit identifies turn rewards and intended advantage signs compatible with all
exact-optimal correspondences on frozen post-training rollouts. The public quick-start does not
visibly enable the intended multi-turn mask, and no published training trace exposes the assignment
chosen at every update. Even with the branch enabled, a local sign reversal need not reduce final
performance because later updates, trajectory-level credit, and optimization may compensate. A
policy-level claim requires a frozen comparison of declared tie policies under one active training
configuration. We report no such effect, and the pre-registered downstream consequence remains
\textsc{Weak}.

\paragraph{Positive-control evidence levels.}
The prospective natural-response gate, the corrected-witness replication, and the later pre-outcome
held-out validation are not exchangeable samples. The first failed, the second followed diagnosis,
and only the third tests the frozen transport-selection rule on held-out repositories. They are never
pooled into one success rate. The held-out result validates deterministic code transformations across
four repositories, not new natural trajectories or a second agent family.

\paragraph{Complexity boundary.}
The shared-standardization reduction is a worst-case, finite-bit construction. Executable conformance
checks falsify algebra and source-realizability errors but do not establish natural-case hardness or
fixed-precision asymptotic hardness. The natural interaction graphs audited here have low treewidth
and are exactly solvable. The theorem therefore separates compiler regimes; it does not claim that
ordinary MatchTIR groups instantiate hard MAX-CUT cases.

\paragraph{Generality.}
Most reanalyses are deterministic from released or frozen samples. Provider-side agent outputs cannot
be reproduced bitwise because immutable checkpoint digests and controllable sampling seeds were not
available. The prospective transport audit uses one agent family, while the MatchTIR analysis uses one
released checkpoint and one declared compiler family. Cross-model and cross-training-algorithm
generality remain untested.
\section{Computational Cost of Solver-Independent Auditing}

The body audits report an envelope over an optimizer's optimal set rather than a solver-returned point.
Theorem~2 proves one compiler-dependent hardness boundary. This section introduces no additional
complexity classification; it locates the diagnostics actually audited within the known landscape and
gives a reporting rule when a backend leaves the tractable region.

\paragraph{D.1\quad Detection cost.} For an additive readout $h$, deciding whether a point diagnosis
is solver-determined costs no more than its sharp envelope: the same two-pass dynamic program that
returns $\min$ and $\max$ over the optimal-path DAG (Raj) decides constancy by comparing them, and an
additive readout on an assignment face requires two min/max-cost matchings on the equality graph.
For a nonlinear readout such as inversion-based Order Consistency, assignment multiplicity remains
easy to detect, but multiplicity alone does not prove that $h$ varies and no general two-matching
claim is made. B.2 exactly enumerates the small public tie faces used in this paper.

\paragraph{D.2\quad Polynomial envelopes for the audited backends.} For an additive diagnostic on a
shortest-path DAG the endpoints are a two-pass DP; for an additive diagnostic on the optimal-assignment
face the endpoints are two min/max-cost matchings, and that face is an integral polytope (Birkhoff--von
Neumann \citep{birkhoff1946tres}), so the extrema are attained at genuine matchings. Both audited pipelines
fall in this region.

\paragraph{D.3\quad Complexity outside the audited backends.} Two standard results bound what a
general solver-independent summary can cost; we cite them as the surrounding landscape, not as our
contribution. First, once ``optimal'' is relaxed to \emph{near}-optimal ($f\le f^\star+\varepsilon$),
optimizing an additive diagnostic over the enlarged set---the two additive edge weights being cost and
diagnostic---is in general as hard as the resource-constrained shortest-path problem, which is NP-hard
\citep{handler1980dual}: exact-tie auditing can be easy while approximate-tie auditing is not. Second,
\emph{exactly} averaging even a linear diagnostic uniformly over an optimal matching set---a uniform
edge-indicator average is the edge marginal $\mathrm{per}(\cdot)/\mathrm{per}$---is in general as hard
as evaluating a permanent, hence \#P-hard \citep{valiant1979permanent}, though it admits a fully
polynomial randomized approximation scheme \citep{jerrum2004permanent}. Guarding (the sharp $\min/\max$) can thus be polynomial on the same fiber
where exact fair averaging is intractable.

\paragraph{D.4\quad Reporting rule.} Report the exact envelope where a polynomial backend applies, as both
body audits do. Otherwise report a declared randomized estimate with its sampling error, or abstain,
rather than reporting a single hidden tie-break as the diagnosis. This is guidance for deploying the
auditor, not a claim that the paper's diagnostics are hard: the cases we audit are exactly the tractable
ones.

\paragraph{AI assistance.} Generative AI supported language editing, exploratory code drafting, and
literature search; the authors verified and retain responsibility for all content.

\bibliographystyle{plainnat}
\bibliography{references}

\end{document}